\documentclass{article}

\usepackage{arxiv}

\usepackage[utf8]{inputenc} %
\usepackage[T1]{fontenc}    %
\usepackage{hyperref}       %
\usepackage{url}            %
\usepackage{booktabs}       %
\usepackage{amsfonts}       %
\usepackage{nicefrac}       %
\usepackage{microtype}      %
\usepackage{graphicx}
\usepackage{natbib}

\usepackage{fontawesome}

\usepackage{amsmath}
\usepackage{amssymb}
\usepackage{mathtools}
\usepackage{amsthm}

\theoremstyle{plain}
\newtheorem{theorem}{Theorem}[section]

\newtheorem{lemma}[theorem]{Lemma}

\theoremstyle{definition}

\theoremstyle{remark}

\usepackage{algorithm}
\usepackage{algorithmic}

\usepackage{subcaption}
\DeclareMathOperator*{\argmin}{arg\,min}

\title{Relative Density Ratio Optimization for\\ Stable and Statistically Consistent Model Alignment}

\date{}

\author{
Hiroshi Takahashi\thanks{Corresponding author: \texttt{hiroshibm.takahashi@ntt.com}}\\
NTT, Inc.\\
\And
Tomoharu Iwata\\
NTT, Inc.\\
\And
Atsutoshi Kumagai\\
NTT, Inc.\\
\And
Sekitoshi Kanai\\
NTT, Inc.\\
\AND
Masanori Yamada\\
NTT DOCOMO, INC.\\
\And
Kosuke Nishida\\
NTT, Inc.\\
\And
Kazutoshi Shinoda\\
NTT, Inc.\\
}

\renewcommand{\shorttitle}{Relative Density Ratio Optimization}

\hypersetup{
pdftitle={Relative Density Ratio Optimization for Stable and Statistically Consistent Model Alignment},
pdfsubject={stat.ML, cs.AI, cs.LG},
pdfauthor={Hiroshi Takahashi, Tomoharu Iwata, Atsutoshi Kumagai, Sekitoshi Kanai, Masanori Yamada, Kosuke Nishida, Kazutoshi Shinoda},
pdfkeywords={Alignment, Large Language Models, Relative Density Ratio Estimation},
}

\begin{document}
\maketitle

\begin{abstract}
Aligning language models with human preferences is essential for ensuring their safety and reliability.
Although most existing approaches assume specific human preference models such as the Bradley-Terry model,
this assumption may fail to accurately capture true human preferences,
and consequently, these methods lack statistical consistency,
i.e., the guarantee that language models converge to the true human preference as the number of samples increases.
In contrast,
direct density ratio optimization (DDRO) achieves statistical consistency without assuming any human preference models.
DDRO models the density ratio between preferred and non-preferred data distributions using the language model,
and then optimizes it via density ratio estimation.
However, this density ratio is unstable and often diverges, leading to training instability of DDRO.
In this paper, we propose a novel alignment method that is both stable and statistically consistent.
Our approach is based on the relative density ratio between the preferred data distribution and a mixture of the preferred and non-preferred data distributions.
Our approach is stable since this relative density ratio is bounded above and does not diverge.
Moreover, it is statistically consistent and yields significantly tighter convergence guarantees than DDRO.
We experimentally show its effectiveness with Qwen 2.5 and Llama 3.
\end{abstract}

\begin{center}
\small
\faicon{github}\,\url{https://github.com/takahashihiroshi/rdro}
\end{center}

\section{Introduction}

Language models have demonstrated remarkable performance across a wide range of tasks,
such as text generation, translation, question answering, and information retrieval \citep{vaswani2017attention,brown2020language,achiam2023gpt}.
Nevertheless, they can also generate outputs that are undesirable for humans,
such as inaccurate information \citep{ji2023survey},
harmful content \citep{bender2021dangers},
and biased statements \citep{sheng2019woman,gehman2020realtoxicityprompts}.
This may threaten the safety and reliability of language models.

Alignment, which tunes language models to follow human preferences using a preference dataset containing preferred and non-preferred responses to various prompts,
is essential for ensuring safety and reliability \citep{ouyang2022training,rafailov2023direct,ethayarajh2024model}.
To model human preferences,
most existing approaches assume the Bradley-Terry model \citep{bradley1952rank},
which is widely used in learning to rank.
For example, reinforcement learning from human feedback (RLHF) \citep{ouyang2022training} first trains a reward model using the preference dataset under the Bradley-Terry assumption,
and then tunes the language model via reinforcement learning to maximize the reward-model score.
Direct preference optimization (DPO) \citep{rafailov2023direct} shows that the reward model can be represented by the language model,
and aligns the language model through training the reward model without reinforcement learning.
In contrast,
Kahneman-Tversky optimization (KTO) \citep{ethayarajh2024model} uses prospect theory \citep{tversky1992advances,kahneman2013prospect} to model human preferences,
which captures how humans make decisions under uncertainty.
KTO can be regarded as a generalization of DPO.

Although these approaches are powerful,
since human preference models may not capture the real complexities of human preference accurately \citep{wu2022diagnostic},
they lack statistical consistency \citep{vapnik1999overview,mohri2018foundations},
i.e., the guarantee that the language model converges to the true human preference as the number of samples increases \citep{higuchi2025direct}.
To address this,
direct density ratio optimization (DDRO) \citep{higuchi2025direct} presents an alignment approach without assuming any human preference models.
DDRO is based on density ratio estimation,
which directly estimates the ratio between two distributions.
DDRO models the density ratio between the preferred and non-preferred data distributions using the language model,
and optimizes the language model through density ratio estimation.
In contrast to other alignment approaches, DDRO satisfies statistical consistency.
However, it has been empirically observed that training tends to be unstable \citep{higuchi2025direct}.

Our purpose is to propose an alignment method that is both stable and statistically consistent by addressing the instability of DDRO.
One reason for the instability of DDRO lies in the instability of the density ratio.
Although DDRO models the density ratio between the preferred and non-preferred data distributions,
it may diverge when the supports of these two distributions,
the regions where their values are non-zero,
are different \citep{sugiyama2012density}.
Since preferred and non-preferred responses are likely to have different supports,
this density ratio may diverge, leading to instability in DDRO.
To address this, we employ a more stable approach called relative density ratio estimation \citep{yamada2013relative}.
The relative density ratio is defined as the ratio between the preferred data distribution and a mixture of the preferred and non-preferred data distributions.
Unlike the ordinary density ratio,
the relative density ratio is bounded above and does not diverge,
enabling more stable estimation.
We model the relative density ratio using the language model,
and optimize the language model through relative density ratio estimation.
We theoretically show that the proposed method is more stable, statistically consistent, and yields significantly tighter convergence guarantees than DDRO.

Our contributions can be summarized as follows:
\begin{itemize}
    \item We propose a novel alignment method called relative density ratio optimization (RDRO), which is based on relative density ratio estimation \citep{yamada2013relative}.
    \item We theoretically show that our RDRO is more stable, statistically consistent, and yields significantly tighter convergence guarantees than DDRO.
    \item We demonstrate the effectiveness of our RDRO with Qwen 2.5 \citep{team2024qwen2} and Llama 3 \citep{grattafiori2024llama}.
\end{itemize}

\section{Preliminaries}

\subsection{Problem Setup}

Let $x$ be the prompt and $y$ be the response.
Given a preferred dataset $\mathcal{D}^{+}=\left(x_{n}^{+},y_{n}^{+}\right)_{n=1}^{N}$
and a non-preferred dataset $\mathcal{D}^{-}=\left(x_{m}^{-},y_{m}^{-}\right)_{m=1}^{M}$,
we assume that $\mathcal{D}^{+}$ follows preferred data distribution $p^{+}(x,y)=p^{+}(y|x)p(x)$,
and $\mathcal{D}^{-}$ follows non-preferred data distribution $p^{-}(x,y)=p^{-}(y|x)p(x)$.
Note that we assume the same prompt distribution: $p(x)$ for both datasets,
while the response distributions given prompts: $p^{+}(y|x)$ and $p^{-}(y|x)$ differ.
Let $p_{\theta}(y|x)$ be the policy of the language model to be aligned (e.g., a pre-trained or supervised fine-tuning model) with parameter $\theta$,
and $p_{\mathrm{ref}}(y|x)$ be the reference policy,
which serves as a regularizer for $p_{\theta}(y|x)$ and is typically set to a frozen copy of the initial $p_{\theta}(y|x)$.
Our goal is to align $p_{\theta}(y|x)$ with $p^{+}(y|x)$ using $\mathcal{D}^{+}$, $\mathcal{D}^{-}$, and $p_{\mathrm{ref}}(y|x)$.

\subsection{Direct Density Ratio Optimization}
\label{sec:ddro}

Direct density ratio optimization (DDRO) \citep{higuchi2025direct} tries to align the policy $p_{\theta}(y|x)$ with the true preference $p^{+}(y|x)$ using density ratio estimation \citep{sugiyama2012density}.
DDRO assumes that the reference policy $p_{\mathrm{ref}}(y|x)$ can be written as follows:
\begin{align}
p_{\mathrm{ref}}(y|x)=\alpha p^{+}(y|x)+(1-\alpha)p^{-}(y|x),
\end{align}
where $\alpha \in (0,1)$ is a hyperparameter.
This assumption can be interpreted as labeling the response of the reference policy $p_{\mathrm{ref}}(y|x)$ as preferred with probability $\alpha$ and as non-preferred with probability $1-\alpha$.
With this assumption, the following relation holds between the reference policy $p_{\mathrm{ref}}(y|x)$ and the true preference $p^{+}(y|x)$:
\begin{align}
\frac{p_{\mathrm{ref}}(y|x)}{p^{+}(y|x)}
=\frac{\alpha p^{+}(y|x)+(1-\alpha)p^{-}(y|x)}{p^{+}(y|x)}
=\alpha+(1-\alpha)\frac{p^{-}(y|x)}{p^{+}(y|x)}.
\end{align}
Accordingly, the density ratio $p^{-}(y|x)/p^{+}(y|x)$ is written as follows:
\begin{align}
\frac{p^{-}(y|x)}{p^{+}(y|x)}=\frac{1}{1-\alpha}\frac{p_{\mathrm{ref}}(y|x)}{p^{+}(y|x)}-\frac{\alpha}{1-\alpha}\equiv g^{*}(y|x).
\end{align}
By replacing the true preference $p^{+}(y|x)$ with the policy $p_{\theta}(y|x)$,
this density ratio is modeled as follows:
\begin{align}
g_{\theta}(y|x)\equiv\frac{1}{1-\alpha}\frac{p_{\mathrm{ref}}(y|x)}{p_{\theta}(y|x)}-\frac{\alpha}{1-\alpha}.
\end{align}

DDRO tries to align $p_{\theta}(y|x)$ with $p^{+}(y|x)$ by minimizing the Bregman divergence \citep{sugiyama2012density} between $g^{*}(y|x)$ and $g_{\theta}(y|x)$:
\begin{align}
\mathcal{L}_{\mathrm{DRE}}(\theta)=\mathbb{E}_{p^{+}(x,y)}\left[\mathrm{Breg}_{f}\left(g^{*}(y|x)\Vert g_{\theta}(y|x)\right)\right],
\end{align}
where
\begin{align}
\mathrm{Breg}_{f}\left(u\Vert v\right)=f(u)-f(v)-f^{\prime}(v)(u-v),
\end{align}
$\mathbb{E}[\cdot]$ is the expectation,
$f: \mathbb{R} \to \mathbb{R}$ is a differentiable and strictly convex function,
and $f^{\prime}$ is the derivative of $f$.

When $f(t)=t\log t-(1+t)\log(1+t)$,
this optimization problem can be reduced to the logistic regression problem \citep{sugiyama2012density}:
\begin{align}
\mathcal{L}_{\mathrm{DRE}}(\theta)=\mathbb{E}_{p^{+}(x,y)}\left[\log(1+g_{\theta}(y|x))\right]
+\mathbb{E}_{p^{-}(x,y)}\left[\log(1+1/g_{\theta}(y|x))\right].
\end{align}
Using the datasets $\mathcal{D}^{+}$ and $\mathcal{D}^{-}$,
$\mathcal{L}_{\mathrm{DRE}}(\theta)$ is approximated as follows:
\begin{align}
\frac{1}{N}\sum_{n=1}^{N}\log(1+g_{\theta}(y_{n}^{+}|x_{n}^{+}))
+\frac{1}{M}\sum_{m=1}^{M}\log(1+1/g_{\theta}(y_{m}^{-}|x_{m}^{-})).
\end{align}

Unfortunately, this optimization is unstable in practice due to gradient spikes \citep{higuchi2025direct}.
To mitigate this,
DDRO heuristically applies a monotonically increasing function $S(t)=\log\sigma(t)$,
where $\sigma(t)=1/(1+\exp(-t))$ is the sigmoid function,
to this objective as follows:
\begin{align}
\hat{\mathcal{L}}_{\mathrm{DRE}}(\theta)=\frac{1}{N}\sum_{n=1}^{N}S(\log(1+g_{\theta}(y_{n}^{+}|x_{n}^{+})))
+\frac{1}{M}\sum_{m=1}^{M}S(\log(1+1/g_{\theta}(y_{m}^{-}|x_{m}^{-}))).
\label{eq:stabilized_dre}
\end{align}

Since just minimizing this objective will lead to catastrophic forgetting \citep{kirkpatrick2017overcoming},
DDRO introduces the regularizer based on the Kullback-Leibler (KL) divergence between $p_{\theta}(y|x)$ and $p_{\mathrm{ref}}(y|x)$ as follows:
\begin{align}
\mathcal{L}_{\mathrm{DDRO}}(\theta)=\hat{\mathcal{L}}_{\mathrm{DRE}}(\theta)+\beta\mathrm{KL}\left(p_{\theta}\Vert p_{\mathrm{ref}}\right),
\end{align}
where $\beta$ is a hyperparameter that controls the strength of the regularizer.

DDRO has the following property for the optimal parameter $\theta^{*}=\argmin_{\theta}\mathcal{L}_{\mathrm{DRE}}(\theta)$:
\begin{align}
\mathcal{L}_{\mathrm{DRE}}(\theta^{*})=0
\Leftrightarrow g_{\theta^{*}}=g^{*}
\Leftrightarrow p_{\theta^{*}}=p^{+}.
\end{align}
Furthermore,
as the number of samples increases,
the policy $p_{\theta}(y|x)$ converges to the true preference $p^{+}(y|x)$ at the rate of $\mathcal{O}(1/\sqrt{N}+1/\sqrt{M})$.
Although DDRO has this ideal statistical consistency,
it has been experimentally observed that training still tends to become unstable.

\section{Proposed Method}
\label{sec:rdro}

One reason for the instability of DDRO lies in the instability of the density ratio $g_{\theta}(y|x)$.
Although $g_{\theta}(y|x)$ models the true density ratio $p^{-}(y|x)/p^{+}(y|x)$,
it may diverge when the supports of these two distributions,
the regions where their values are non-zero,
are different.
Since the preferred response $y^{+}$ and the non-preferred response $y^{-}$ are likely to have different supports,
the density ratio may diverge, leading to instability in DDRO.

In this section,
we propose an alignment method that is both stable and statistically consistent by addressing the instability of DDRO.
To this end,
we employ a more stable approach called relative density ratio estimation \citep{yamada2013relative},
which generalizes density ratio estimation.

\subsection{Relative Density Ratio Optimization}

First, we define the relative density ratio $r^{*}(y|x)$ as follows:
\begin{align}
r^{*}(y|x)\equiv\frac{p^{+}(y|x)}{p_{\mathrm{ref}}(y|x)}=\frac{p^{+}(y|x)}{\alpha p^{+}(y|x)+(1-\alpha)p^{-}(y|x)}.
\end{align}
This relative density ratio can be regarded as a generalization of the density ratio since it reduces to the standard density ratio when $\alpha=0$.
Figure \ref{fig:two_ratios} compares the density ratio $g^{*}(y|x)$ and the relative density ratio $r^{*}(y|x)$.
By definition, $r^{*}(y|x)\in\left[0,1/\alpha\right]$.
Hence, unlike the density ratio,
the relative density ratio does not diverge and can be estimated in a stable manner.

\begin{figure}[tb]
  \centering
  \begin{subfigure}{0.495\textwidth}
    \includegraphics[width=\linewidth]{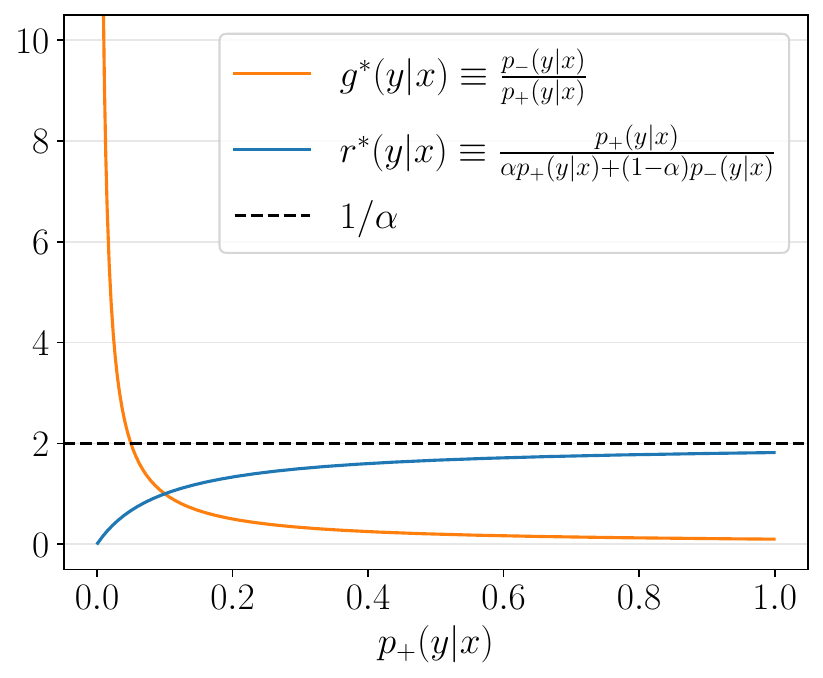}
    \caption{Comparing two ratios.}
    \label{fig:two_ratios}
  \end{subfigure}
  \begin{subfigure}{0.495\textwidth}
    \includegraphics[width=\linewidth]{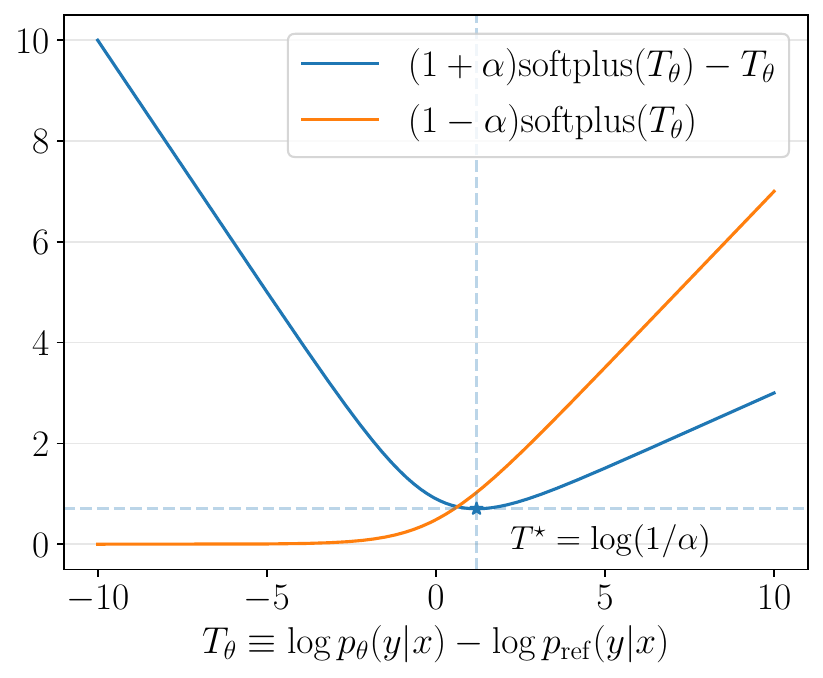}
    \caption{Our loss functions.}
    \label{fig:loss_shape}
  \end{subfigure}
  \caption{
    (a) Comparing density ratio $g^{*}(y|x)$ and relative density ratio $r^{*}(y|x)$,
    where non-preferred data distribution $p^{-}(y|x)=0.1$ and the hyperparameter $\alpha=0.5$.
    Although $g^{*}(y|x)$ diverges as $p^{+}(y|x)\to 0$,
    $r^{*}(y|x)$ is bounded above by $1/\alpha$.
    (b) Our loss functions for preferred (blue) and non-preferred (orange) samples with $\alpha=0.3$.
    The loss for preferred samples is minimized when $T_{\theta} \equiv \log p_{\theta}(y|x) - \log p_{\mathrm{ref}}(y|x)$ attains $\log (1/\alpha)$.
  }
\end{figure}

By replacing the true preference $p^{+}(y|x)$ with the policy $p_{\theta}(y|x)$,
we model this relative density ratio as follows:
\begin{align}
r_{\theta}(y|x)\equiv\frac{p_{\theta}(y|x)}{p_{\mathrm{ref}}(y|x)}.
\end{align}
Analogous to DDRO,
we try to align the policy $p_{\theta}(y|x)$ with the true preference $p^{+}(y|x)$ by minimizing the Bregman divergence between $r^{*}(y|x)$ and $r_{\theta}(y|x)$:
\begin{align}
\mathcal{L}_{\mathrm{RDRE}}(\theta)=\mathbb{E}_{p_{\mathrm{ref}}(x,y)}\left[\mathrm{Breg}_{f}\left(r^{*}(y|x)\Vert r_{\theta}(y|x)\right)\right],
\end{align}
where $p_{\mathrm{ref}}(x,y)=p_{\mathrm{ref}}(y|x)p(x)$.
When $f(t)=t\log t-(1+t)\log(1+t)$,
this optimization problem can be reduced to the logistic regression problem \citep{sugiyama2012density}:
\begin{align}
\mathcal{L}_{\mathrm{RDRE}}(\theta)=\mathbb{E}_{p_{\mathrm{ref}}(x,y)}\left[\log(1+r_{\theta}(y|x))\right]
+\mathbb{E}_{p^{+}(x,y)}\left[\log(1+1/r_{\theta}(y|x))\right].
\end{align}

Defining $T_{\theta}(x,y)=\log p_{\theta}(y|x)-\log p_{\mathrm{ref}}(y|x)$, we have:
\begin{align}
r_{\theta}(y|x)=\exp(T_{\theta}(x,y)).
\end{align}
Substituting this into $\mathcal{L}_{\mathrm{RDRE}}(\theta)$, we obtain:
\begin{align}
\mathcal{L}_{\mathrm{RDRE}}(\theta)
&=\mathbb{E}_{p_{\mathrm{ref}}(x,y)}\left[\log(1+\exp(T_{\theta}(x,y)))\right]
+\mathbb{E}_{p^{+}(x,y)}\left[\log(1+1/\exp(T_{\theta}(x,y)))\right]\nonumber\\
&=\mathbb{E}_{p_{\mathrm{ref}}(x,y)}\left[\varphi(T_{\theta}(x,y))\right]
+\mathbb{E}_{p^{+}(x,y)}\left[\varphi(T_{\theta}(x,y))-T_{\theta}(x,y)\right]\nonumber\\
&=\mathbb{E}_{p^{+}(x,y)}\left[(1+\alpha)\varphi(T_{\theta}(x,y))-T_{\theta}(x,y)\right]
+\mathbb{E}_{p^{-}(x,y)}\left[(1-\alpha)\varphi(T_{\theta}(x,y))\right],
\end{align}
where $\varphi(t)=\log(1+\exp(t))$ is the softplus function.
To obtain the last expression,
we used $p_{\mathrm{ref}}(x,y)=\alpha p^{+}(x,y)+(1-\alpha)p^{-}(x,y)$.
Figure \ref{fig:loss_shape} shows our loss functions for preferred and non-preferred samples.
Note that $S(t)$ used in Eq.~(\ref{eq:stabilized_dre}) can be written as $S(t)=-\varphi(-t)$.
This shows that the stabilizing transformation employed in DDRO arises naturally in our approach.

Using the datasets $\mathcal{D}^{+}$ and $\mathcal{D}^{-}$, $\mathcal{L}_{\mathrm{RDRE}}(\theta)$ is approximated as follows:
\begin{align}
\mathcal{L}_{\mathrm{RDRO}}(\theta)\equiv
\frac{1}{N}\sum_{n=1}^{N}\left\{ (1+\alpha)\varphi(T_{\theta}(x_{n}^{+},y_{n}^{+}))-T_{\theta}(x_{n}^{+},y_{n}^{+})\right\}
+\frac{1}{M}\sum_{m=1}^{M}(1-\alpha)\varphi(T_{\theta}(x_{m}^{-},y_{m}^{-})).
\label{eq:proposed}
\end{align}
We optimize this objective by the stochastic optimization method such as AdamW \citep{loshchilov2017decoupled}.
As we will show later, we can omit the KL regularizer used in DDRO,
since $\alpha$ serves as a regularization parameter.
We refer to our approach as relative density ratio optimization (RDRO).
Algorithm~\ref{alg:proposed} provides the pseudocode for RDRO,
and PyTorch implementation is available at Appendix~\ref{sec:implementation}.
\begin{algorithm}[tb]
  \caption{Relative Density Ratio Optimization}
  \textbf{Input}: datasets $(\mathcal{D}^{+}, \mathcal{D}^{-})$, reference policy $p_{\mathrm{ref}}(y|x)$, and hyperparameter $\alpha\in(0,1)$\\
  \textbf{Output}: target policy $p_{\theta}(y|x)$\\
  \textbf{Procedure}:
  \begin{algorithmic}[1]
    \WHILE{not converged}
      \STATE Sample mini-batch $\mathcal{B}$ from datasets $(\mathcal{D}^{+}, \mathcal{D}^{-})$
      \STATE Compute gradient of Eq.~(\ref{eq:proposed}) with $\mathcal{B}$
      \STATE Update $\theta$ with the above gradient
    \ENDWHILE
  \end{algorithmic}
  \label{alg:proposed}
\end{algorithm}

In the following section,
we show that our approach is both stable and statistically consistent.

\subsection{Stability Analysis}

First, we analyze the stability of the proposed method by examining the gradient of the loss function.
Since
\begin{align}
\nabla_{\theta}\varphi(T_{\theta}(x,y))=\sigma(T_{\theta}(x,y))\nabla_{\theta}T_{\theta}(x,y)
\end{align}
and
\begin{align}
\nabla_{\theta}T_{\theta}(x,y)=\nabla_{\theta}\log p_{\theta}(y|x),
\end{align}
the gradient of $\mathcal{L}_{\mathrm{RDRE}}(\theta)$ can be written as follows:
\begin{align}
\nabla_{\theta}\mathcal{L}_{\mathrm{RDRE}}(\theta)=\mathbb{E}_{p^{+}(x,y)}\left[c_{\theta}^{+}(x,y)\nabla_{\theta}\log p_{\theta}(y|x)\right]
+\mathbb{E}_{p^{-}(x,y)}\left[c_{\theta}^{-}(x,y)\nabla_{\theta}\log p_{\theta}(y|x)\right],
\end{align}
where
\begin{align}
c_{\theta}^{+}(x,y)=(1+\alpha)\sigma(T_{\theta}(x,y))-1,
\qquad c_{\theta}^{-}(x,y)=(1-\alpha)\sigma(T_{\theta}(x,y)).
\end{align}

For the preferred dataset $\mathcal{D}^{+}$, we would like $p_{\theta}(y|x)$ to be large,
whereas for the non-preferred dataset $\mathcal{D}^{-}$, we would like $p_{\theta}(y|x)$ to be small.
Thus, $c_{\theta}^{+}(x,y)\leq 0$ and $c_{\theta}^{-}(x,y)\geq 0$ are required.
By definition, $c_{\theta}^{-}(x,y)$ is always non-negative,
and when $p_{\theta}(y|x)$ becomes sufficiently small for the non-preferred data,
$c_{\theta}^{-}(x,y)$ becomes zero.

On the other hand, for $c_{\theta}^{+}(x,y)$, the following inequality holds:
\begin{align}
c_{\theta}^{+}(x,y)\leq 0\Leftrightarrow r_{\theta}(y|x) \equiv \frac{p_{\theta}(y|x)}{p_{\mathrm{ref}}(y|x)}\leq\frac{1}{\alpha}.
\end{align}
For the preferred data $(x^{+},y^{+})$,
the gradient is negative when $r_{\theta}(y^{+}|x^{+})\leq1/\alpha$.
This enlarges $p_{\theta}(y^{+}|x^{+})$ until $c_{\theta}^{+}(x^{+},y^{+})$ becomes zero at $r_{\theta}(y^{+}|x^{+})=1/\alpha$.
Hence, the constraint on the relative density ratio,
$r_{\theta}(y^{+}|x^{+})\in[0,1/\alpha]$,
is automatically satisfied.

From another perspective, $p_{\theta}(y|x)$ is trained so that it does not exceed $p_{\mathrm{ref}}(y|x)/\alpha$.
When $\alpha$ is large, $p_{\theta}(y|x)$ remains close to $p_{\mathrm{ref}}(y|x)$,
whereas when $\alpha$ is small,
$p_{\theta}(y|x)$ can deviate from $p_{\mathrm{ref}}(y|x)$.
Hence, $\alpha$ can be interpreted as a regularization parameter.

\subsection{Statistical Consistency Analysis}

Next, we analyze the statistical consistency of the proposed method.
For the optimal parameter $\theta^{*}$ that minimizes the true risk $\mathcal{L}_{\mathrm{RDRE}}(\theta)=\mathbb{E}_{p_{\mathrm{ref}}}\left[\mathrm{Breg}_{f}\left(r^{*}\Vert r_{\theta}\right)\right]$,
since $f$ is strictly convex,
the following consistency holds:
\begin{align}
\mathcal{L}_{\mathrm{RDRE}}(\theta^{*})=0
\Leftrightarrow r_{\theta^{*}}=r^{*}
\Leftrightarrow p_{\theta^{*}}=p^{+}.
\end{align}
Although this is an ideal property,
since the number of samples is finite in practice,
it is necessary to evaluate the convergence rate.

As a preliminary step,
we evaluate the Lipschitz constant of the Bregman divergence,
which can be rewritten as follows:
\begin{align}
\mathrm{Breg}_{f}\left(u\Vert v\right)=\underbrace{-f(v)+f^{\prime}(v)v}_{\psi_{1}(v)}+\underbrace{(-f^{\prime}(v))}_{\psi_{2}(v)}u+\underbrace{f(u)}_{Const.}.
\end{align}
Let $L_{1}$ and $L_{2}$ be the Lipschitz constants of $\psi_{1}$ and $\psi_{2}$, respectively.
Then, the Lipschitz constant of the Bregman divergence can be written as follows:
\begin{align}
\mathrm{Lip}\left(\mathrm{Breg}_{f}\left(u\Vert\cdot\right)\right)=L_{1}+(\sup\left|u\right|)L_{2}.
\end{align}
For the proposed method, since $r^{*} \leq 1/\alpha$, this Lipschitz constant becomes:
\begin{align}
C_{\mathrm{Lip}}
=\mathrm{Lip}\left(\mathrm{Breg}_{f}\left(r^{*}\Vert\cdot\right)\right)
=L_{1}+(\sup\left|r^{*}\right|)L_{2}
=L_{1}+\frac{1}{\alpha}L_{2}.
\end{align}

Let $\Theta$ be the parameter set,
and $\mathcal{H}=\left\{ p_{\theta}\mid\theta\in\Theta\right\}$ be the corresponding model class,
following the convention of \citep{higuchi2025direct}.
For this $\mathcal{H}$,
the Rademacher complexity \citep{mohri2018foundations} is defined as $\mathcal{R}_{K}(\mathcal{H})$,
where $K$ is the number of samples.
The Rademacher complexity measures richness of a model class and is commonly used to bound generalization error.
Based on this, we evaluate the estimation error of the policy with the true preference.

\begin{theorem}
Let us define $\hat{\theta}=\argmin_{\theta}\hat{\mathcal{L}}_{\mathrm{RDRE}}(\theta)$,
where $\hat{\mathcal{L}}_{\mathrm{RDRE}}(\theta)$ is the empirical approximation of $\mathcal{L}_{\mathrm{RDRE}}(\theta)$ by using the datasets $\mathcal{D}^{\pm}=\left\{ \mathcal{D}^{+},\mathcal{D}^{-}\right\}$.
The estimation error of $p_{\hat{\theta}}$ with $p^{+}$ is bounded above as follows:
\begin{align}
\mathbb{E}_{\mathcal{D}^{\pm}}\left[\mathbb{E}_{p^{+}}\left[\left(p_{\hat{\theta}}-p^{+}\right)^{2}\right]\right]\leq\frac{2}{\alpha\mu}\left[\inf_{\theta\in\Theta}\mathcal{L}_{\mathrm{RDRE}}(\theta)
+4C_{\mathrm{Lip}}\left\{ \alpha\mathcal{R}_{N}(\mathcal{H})+(1-\alpha)\mathcal{R}_{M}(\mathcal{H})\right\} \right],
\label{eq:consistency}
\end{align}
where $\mu$ satisfies $f^{\prime\prime}(t)\geq\mu$ for all $t$.
\label{thm:consistency}
\end{theorem}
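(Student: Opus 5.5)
The argument has three steps: turn strong convexity of $f$ into a squared-error lower bound, compare $p_{\hat\theta}$ with the best-in-class minimizer, and control the resulting generalization gap by Rademacher complexities.

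\emph{Step 1 (lower bound by strong convexity).} By Taylor's theorem with $f''\geq\mu$, $\mathrm{Breg}_f(u\Vert v)\geq\frac{\mu}{2}(u-v)^2$. Apply this with $u=r^*$ and $v=r_\theta$ pointwise, then take the expectation under $p_{\mathrm{ref}}$:
\begin{align*}
\mathcal{L}_{\mathrm{RDRE}}(\theta)\geq\frac{\mu}{2}\mathbb{E}_{p_{\mathrm{ref}}}\bigl[(r^*-r_\theta)^2\bigr]=\frac{\mu}{2}\mathbb{E}_{p_{\mathrm{ref}}}\Bigl[\frac{(p^+-p_\theta)^2}{p_{\mathrm{ref}}^2}\Bigr].
\end{align*}
We need to reach an expectation under $p^+$. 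Write $\mathbb{E}_{p^+}[(p_\theta-p^+)^2]=\mathbb{E}_{p_{\mathrm{ref}}}[r^*(p_\theta-p^+)^2]$. We also use $p_{\mathrm{ref}}\geq\alpha p^+$, which gives $r^*\leq 1/\alpha$. The factor $1/(\alpha\mu)$ in the statement suggests combining these as $\mathbb{E}_{p^+}[(p_\theta-p^+)^2]\leq\frac{1}{\alpha}\mathbb{E}_{p_{\mathrm{ref}}}[(p_\theta-p^+)^2\cdot(\cdot)]$.

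This is the step I expect to be the main obstacle. The squared error is in densities, not ratios, so a factor $p_{\mathrm{ref}}^2$ must be absorbed, presumably via boundedness of the densities, as in the DDRO analysis. I would mirror the corresponding step in \citet{higuchi2025direct}. The aim is to conclude
\begin{align*}
\mathbb{E}_{p^+}\bigl[(p_\theta-p^+)^2\bigr]\leq\frac{2}{\alpha\mu}\mathcal{L}_{\mathrm{RDRE}}(\theta),
\end{align*}
which accounts for the constant $2/(\alpha\mu)$.

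\emph{Step 2 (excess-risk decomposition).} Take $\theta\in\Theta$ arbitrary. Using $\hat{\mathcal{L}}(\hat\theta)\leq\hat{\mathcal{L}}(\theta)$,
\begin{align*}
\mathcal{L}(\hat\theta)\leq\mathcal{L}(\theta)+\bigl[\mathcal{L}(\hat\theta)-\hat{\mathcal{L}}(\hat\theta)\bigr]+\bigl[\hat{\mathcal{L}}(\theta)-\mathcal{L}(\theta)\bigr].
\end{align*}
Take $\mathbb{E}_{\mathcal{D}^\pm}$. The last bracket has zero mean for fixed $\theta$. The middle bracket is bounded by $\mathbb{E}\sup_{\theta}|\mathcal{L}(\theta)-\hat{\mathcal{L}}(\theta)|$. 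Taking the infimum over $\theta$ then produces $\inf_\theta\mathcal{L}_{\mathrm{RDRE}}(\theta)$ plus this uniform deviation.

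\emph{Step 3 (Rademacher bound).} Write the loss, up to the constant $\mathbb{E}_{p_{\mathrm{ref}}}[f(r^*)]$, as
\begin{align*}
\mathcal{L}=\mathbb{E}_{p_{\mathrm{ref}}}[\psi_1(r_\theta)]+\mathbb{E}_{p^+}[\psi_2(r_\theta)]
=\alpha\,\mathbb{E}_{p^+}[\psi_1(r_\theta)]+(1-\alpha)\,\mathbb{E}_{p^-}[\psi_1(r_\theta)]+\mathbb{E}_{p^+}[\psi_2(r_\theta)].
\end{align*}
Here $\mathbb{E}_{p_{\mathrm{ref}}}[r^*\psi_2]=\mathbb{E}_{p^+}[\psi_2]$. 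The empirical version replaces these expectations by averages over $N$ preferred samples and $M$ non-preferred samples. Standard symmetrization bounds the expected uniform deviation of each empirical mean by twice its Rademacher complexity. The Ledoux--Talagrand contraction lemma then replaces the composed class by $\mathcal{H}$, at the cost of the Lipschitz constant. Bounding both $\psi_1$ and $\psi_2$ terms by the single constant $C_{\mathrm{Lip}}=L_1+L_2/\alpha$ yields $2\cdot 2C_{\mathrm{Lip}}\{\alpha\mathcal{R}_N(\mathcal{H})+(1-\alpha)\mathcal{R}_M(\mathcal{H})\}$, where the two-sided deviation accounts for the factor $4$. I expect the $\psi_2$ term under $p^+$ to need $\sup|r^*|\leq1/\alpha$ to fold it into $C_{\mathrm{Lip}}$ with weight $\alpha$, and I would track constants loosely there.

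Combining Steps 1–3 gives Eq.~(\ref{eq:consistency}).
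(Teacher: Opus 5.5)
Your overall skeleton (strong convexity, then excess-risk decomposition, then symmetrization plus contraction) matches the paper's. However, Step~1 stops exactly where the paper needs one extra fact, and ``boundedness of the densities'' does not supply it. The paper uses that responses are discrete token sequences, so $p_{\mathrm{ref}}(y|x)$ is a probability mass and $p_{\mathrm{ref}}\leq 1$. Combined with $p_{\theta}-p^{+}=p_{\mathrm{ref}}(r_{\theta}-r^{*})$, $p^{+}=r^{*}p_{\mathrm{ref}}$ and $r^{*}\leq1/\alpha$, this gives
\begin{align*}
\mathbb{E}_{p^{+}}\bigl[(p_{\theta}-p^{+})^{2}\bigr]
=\mathbb{E}_{p_{\mathrm{ref}}}\bigl[r^{*}p_{\mathrm{ref}}^{2}(r_{\theta}-r^{*})^{2}\bigr]
\leq\frac{1}{\alpha}\mathbb{E}_{p_{\mathrm{ref}}}\bigl[(r_{\theta}-r^{*})^{2}\bigr]
\leq\frac{2}{\alpha\mu}\mathcal{L}_{\mathrm{RDRE}}(\theta).
\end{align*}
A generic bound $p_{\mathrm{ref}}\leq B$ would put a factor $B^{2}$ into the constant, and for continuous $y$ no such $B$ need exist. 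The stated constant $2/(\alpha\mu)$ therefore rests specifically on the pmf bound, and you should state it rather than defer to the DDRO analysis.

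Steps~2 and~3 are correct and differ from the paper in ways that make them slightly sharper. In Step~2 you use $\mathbb{E}_{\mathcal{D}^{\pm}}[\hat{\mathcal{L}}(\theta)-\mathcal{L}(\theta)]=0$ at the fixed comparator $\theta$, so only the one-sided deviation $\sup_{\theta}(\mathcal{L}-\hat{\mathcal{L}})$ remains. The paper's Lemma~\ref{lemma:3} bounds both brackets by suprema, which is where its $4C_{\mathrm{Lip}}$ comes from. Your route therefore gives $2C_{\mathrm{Lip}}$, and the theorem follows a fortiori. Your remark that the two-sided deviation ``accounts for the factor 4'' does not match your own Step~2, but it is harmless.

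For the deviation itself, the paper takes $\hat{\mathcal{L}}$ to be the $\alpha$- and $(1-\alpha)$-weighted sample averages of $\ell_{\theta}=\mathrm{Breg}_{f}(r^{*}\Vert r_{\theta})$ over $\mathcal{D}^{+}$ and $\mathcal{D}^{-}$. It then contracts once with $C_{\mathrm{Lip}}=L_{1}+L_{2}\sup|r^{*}|$. This is clean, but that empirical risk contains the unknown $r^{*}$. Your $\psi_{1}/\psi_{2}$ split is, up to the constant $\mathbb{E}_{p_{\mathrm{ref}}}[f(r^{*})]$, exactly the implementable objective $\mathcal{L}_{\mathrm{RDRO}}(\theta)$ of Eq.~(\ref{eq:proposed}). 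The constants fold as you hoped:
\begin{align*}
2\bigl\{(\alpha L_{1}+L_{2})\mathcal{R}_{N}(\mathcal{H})+(1-\alpha)L_{1}\mathcal{R}_{M}(\mathcal{H})\bigr\}
\leq2C_{\mathrm{Lip}}\bigl\{\alpha\mathcal{R}_{N}(\mathcal{H})+(1-\alpha)\mathcal{R}_{M}(\mathcal{H})\bigr\},
\end{align*}
since $\alpha C_{\mathrm{Lip}}=\alpha L_{1}+L_{2}$. In your version, $r^{*}\leq1/\alpha$ enters only through this rewriting, and the bound applies to the estimator that RDRO actually computes.
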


\begin{proof}[Proof sketch]
Since $f$ in Bregman divergence is a strongly convex function,
the true risk $\mathcal{L}_{\mathrm{RDRE}}(\theta)$ is bounded below as follows:
\begin{align}
\mathcal{L}_{\mathrm{RDRE}}(\theta)
=\mathbb{E}_{p_{\mathrm{ref}}}\left[\mathrm{Breg}_{f}\left(r^{*}\Vert r_{\theta}\right)\right]
\geq\frac{\mu}{2}\mathbb{E}_{p_{\mathrm{ref}}}\left[\left(r^{*}-r_{\theta}\right)^{2}\right].
\label{eq:sketch1}
\end{align}
Combining this with $p_{\theta}-p^{+}=p_{\mathrm{ref}}(r_{\theta}-r^{*})$,
$p^{+}=r^{*}p_{\mathrm{ref}}$,
$r^{*} \leq 1/\alpha$,
and $p_{\mathrm{ref}}\leq 1$ since $p_{\mathrm{ref}}$ is a discrete probability distribution,
we obtain:
\begin{align}
\mathbb{E}_{p^{+}}\left[\left(p_{\theta}-p^{+}\right)^{2}\right]\leq\frac{2}{\alpha\mu}\mathcal{L}_{\mathrm{RDRE}}(\theta).
\label{eq:sketch2}
\end{align}
Moreover, for the optimal parameter of the empirical risk $\hat{\mathcal{L}}_{\mathrm{RDRE}}(\theta)$,
the following inequality holds:
\begin{align}
\mathcal{L}_{\mathrm{RDRE}}(\hat{\theta})\leq\inf_{\theta\in\Theta}\mathcal{L}_{\mathrm{RDRE}}(\theta)
+\sup_{\theta\in\Theta}\left(\hat{\mathcal{L}}_{\mathrm{RDRE}}(\theta)-\mathcal{L}_{\mathrm{RDRE}}(\theta)\right)
+\sup_{\theta\in\Theta}\left(\mathcal{L}_{\mathrm{RDRE}}(\theta)-\hat{\mathcal{L}}_{\mathrm{RDRE}}(\theta)\right).
\label{eq:sketch3}
\end{align}
Using the Rademacher complexity,
the second term on the right-hand side can be bounded above as follows:
\begin{align}
\mathbb{E}_{\mathcal{D}^{\pm}}\left[\sup_{\theta\in\Theta}\left(\hat{\mathcal{L}}_{\mathrm{RDRE}}(\theta)-\mathcal{L}_{\mathrm{RDRE}}(\theta)\right)\right]
\leq 2C_{\mathrm{Lip}}\left\{ \alpha\mathcal{R}_{N}(\mathcal{H})+(1-\alpha)\mathcal{R}_{M}(\mathcal{H})\right\}.
\label{eq:sketch4}
\end{align}
The third term on the right-hand side can be bounded in the same way.
Combining Eqs. (\ref{eq:sketch2}), (\ref{eq:sketch3}), and (\ref{eq:sketch4}),
we obtain Theorem~\ref{thm:consistency}.
The detailed proof is given in Appendix~\ref{sec:proof}.
\end{proof}

Since $\mathcal{R}_{K}(\mathcal{H})=\mathcal{O}(1/\sqrt{K})$ usually holds \citep{mohri2018foundations},
the right-hand side of Eq.~(\ref{eq:consistency}) can be evaluated as
\begin{align}
\mathcal{O}\left(\inf_{\theta\in\Theta}\mathcal{L}_{\mathrm{RDRE}}(\theta)+1/\sqrt{N}+1/\sqrt{M}\right).
\end{align}
In addition,
if the model class $\mathcal{H}$ contains the true preference $p^{+}$,
$\inf_{\theta\in\Theta}\mathcal{L}_{\mathrm{RDRE}}(\theta)$,
the misspecification error between $p_{\theta}$ and $p^{+}$,
becomes zero.
Therefore, $\mathbb{E}_{p^{+}}\left[\left(p_{\hat{\theta}}-p^{+}\right)^{2}\right]\to 0$ as $N,M\to \infty$.

\subsection{Comparison of Convergence Rates}
\label{sec:order}

Finally, we compare the convergence rates of the proposed method and DDRO.
According to \citep{higuchi2025direct},
the upper bound of the estimation error for DDRO is:
\begin{align}
\mathbb{E}_{\mathcal{D}^{\pm}}\left[\mathbb{E}_{p^{+}}\left[\left(p_{\hat{\theta}}-p^{+}\right)^{2}\right]\right]\leq\frac{2\left(1-\alpha\right)^{2}}{\alpha^{2}m_{+}^{2}\mu}
\left[\inf_{\theta\in\Theta}\mathcal{L}_{\mathrm{DRE}}(\theta)+4C_{\mathrm{Lip}}^{\prime}\left\{ \mathcal{R}_{N}(\mathcal{H})+\mathcal{R}_{M}(\mathcal{H})\right\} \right],
\end{align}
where
\begin{align}
m_{+}=\min_{x\in\mathcal{X},y\in\mathrm{supp}(p^{+}(\cdot|x))}p^{+}(y|x)>0
\end{align}
is the minimum value of $p^{+}(y|x)$ within its support $\mathrm{supp}(p^{+}(\cdot|x))$,
$\mathcal{X}$ is the input space,
and
\begin{align}
C_{\mathrm{Lip}}^{\prime}
=\mathrm{Lip}\left(\mathrm{Breg}_{f}\left(g^{*}\Vert\cdot\right)\right)
=L_{1}+(\sup\left|g^{*}\right|)L_{2}.
\end{align}

We now compare the upper bound of our approach (Eq.~(\ref{eq:consistency})) with that of DDRO.
Since $\inf_{\theta\in\Theta}\mathcal{L}_{\mathrm{DRE}}(\theta)$ and $\inf_{\theta\in\Theta}\mathcal{L}_{\mathrm{RDRE}}(\theta)$ become zero when the model class $\mathcal{H}$ contains the true preference $p^{+}$,
we compare the remaining terms in the upper bounds.

Since $m_{+}\ll 1$, we expect that the following inequality holds:
\begin{align}
\frac{2}{\alpha\mu}\ll\frac{2\left(1-\alpha\right)^{2}}{\alpha^{2}m_{+}^{2}\mu}.
\label{eq:m_plus}
\end{align}
To justify this, we derive the condition on $\alpha$ under which the inequality holds.
Assuming $\alpha\in(0,1)$ and $\mu>0$, we have:
\begin{align}
\frac{2}{\alpha\mu}<\frac{2\left(1-\alpha\right)^{2}}{\alpha^{2}m_{+}^{2}\mu}
\Leftrightarrow\alpha m_{+}^{2}<\left(1-\alpha\right)^{2}
\Leftrightarrow\alpha^{2}-(2+m_{+}^{2})\alpha+1>0,
\end{align}
which yields:
\begin{align}
\alpha<\left(\frac{\sqrt{m_{+}^{2}+4}-m_{+}}{2}\right)^{2}.
\end{align}
Using a Taylor expansion around $m_{+}\rightarrow0$, we obtain
\begin{align}
\left(\frac{\sqrt{m_{+}^{2}+4}-m_{+}}{2}\right)^{2}=1-m_{+}+O(m_{+}^{2}),
\end{align}
and hence,
\begin{align}
\alpha<1-m_{+}+O(m_{+}^{2}).
\label{eq:alpha_condition}
\end{align}
For any fixed $\alpha\in(0,1)$,
if there exists a sufficiently small $m_{+}$,
then Eq.~(\ref{eq:alpha_condition}) holds.
Since $m_{+}\ll 1$, we expect this condition to be satisfied,
and therefore, the right-hand side in Eq.~(\ref{eq:m_plus}) can be much larger than the left-hand side.

Moreover, since $1/\alpha\ll\sup\left|g^{*}\right|$, the following inequality holds:
\begin{align}
C_{\mathrm{Lip}}=L_{1}+\frac{1}{\alpha}L_{2}\ll L_{1}+(\sup\left|g^{*}\right|)L_{2}=C_{\mathrm{Lip}}^{\prime}.
\end{align}

In addition, since $\alpha \in (0,1)$, the following inequality holds:
\begin{align}
\alpha\mathcal{R}_{N}(\mathcal{H})+(1-\alpha)\mathcal{R}_{M}(\mathcal{H})\leq\mathcal{R}_{N}(\mathcal{H})+\mathcal{R}_{M}(\mathcal{H}).
\end{align}

From the above discussion,
although both the proposed method and DDRO share the order of $\mathcal{O}(1/\sqrt{N}+1/\sqrt{M})$,
our coefficient terms are significantly smaller than those for DDRO.
In particular, in DDRO,
the occurrence of $1/m_{+}$ and the potentially large value of $\sup\left|g^{*}\right|$ arise from the instability of the density ratio $g^{*}\in\left[0,\infty\right)$.
The proposed method avoids this order deterioration by using the relative density ratio $r^{*}\in\left[0,1/\alpha\right]$ instead of the density ratio $g^{*}$.

To summarize our contributions, our approach is more stable and statistically consistent,
and yields significantly tighter convergence guarantees than DDRO.

\section{Related Work}

\subsection{Alignment}

Many alignment methods assume specific human preference models,
such as the Bradley-Terry model \citep{bradley1952rank} and prospect theory \citep{tversky1992advances,kahneman2013prospect}.
Among them, the Bradley-Terry model is the most widely used for training reward models \citep{ouyang2022training,uesato2022solving,rafailov2023direct,ahmadian2024back,xu2024contrastive,xie2024exploratory}.
RLHF \citep{ouyang2022training} and its variants \citep{uesato2022solving,ahmadian2024back} optimize the language model via reinforcement learning to maximize the reward model.
In contrast, DPO \citep{rafailov2023direct} and its variants \citep{xu2024contrastive,xie2024exploratory} show that the reward model can be represented by the language model and directly optimize the language model through reward model training.
KTO \citep{ethayarajh2024model} is an alignment method based on prospect theory and can be regarded as a generalization of DPO.
Furthermore, while RLHF and DPO require triplet data $(x, y_{w}, y_{l})$ where $x$ is a prompt,
$y_{w}$ is a preferred response, and $y_{l}$ is a non-preferred response,
KTO can be applied whenever responses are annotated with binary labels indicating whether they are preferred or not.

Unfortunately, specific human preference models such as the Bradley-Terry model fail to accurately capture true human preferences \citep{wu2022diagnostic}.
A common failure case is \emph{cyclic preference}, as described in Appendix~\ref{sec:cyclic_preference}.
For instance, there may exist three responses $y_{1}, y_{2}, y_{3}$ to a prompt such that $y_{1}$ is preferred over $y_{2}$,
$y_{2}$ is preferred over $y_{3}$,
yet $y_{3}$ is preferred over $y_{1}$.
The Bradley-Terry model cannot capture these inconsistent human preferences \citep{higuchi2025direct}.
Since such situations frequently arise in practice,
alignment methods that rely on a specific model of human preferences have inherent limitations.

To address these limitations,
several approaches have been presented,
including those based on energy-based models \citep{hong2024energy},
Nash equilibrium \citep{munos2023nash},
binary classification \citep{jung2024binary},
and density ratio estimation \citep{higuchi2025direct,kim2025preference}.
In particular, DDRO \citep{higuchi2025direct} has statistical consistency as shown in Section~\ref{sec:ddro}.
However, DDRO is also known to suffer from training instability.
This is due to the instability of density ratio estimation.
In contrast,
our approach offers both training stability and statistical consistency,
and yields significantly tighter convergence guarantees than DDRO.

\subsection{Relative Density Ratio Estimation}

Density ratio estimation \citep{sugiyama2012density},
estimating the ratio of two distributions directly,
has a wide range of applications such as covariate shift adaptation \citep{sugiyama2007covariate},
outlier detection \citep{hido2011statistical},
change-point detection \citep{liu2013change},
positive-unlabeled learning \citep{kato2019learning,kato2021non},
and generative adversarial networks \citep{goodfellow2014generative,nowozin2016f}.

While various density ratio estimators have been presented,
including logistic regression \citep{hastie2005elements},
kernel mean matching \citep{gretton2009covariate},
KL-divergence-based approaches \citep{sugiyama2008direct},
and least-squares-based approaches \citep{kanamori2009least},
they can be unified as minimization problems of Bregman divergence \citep{sugiyama2012density}.
DDRO \citep{higuchi2025direct} provides a theoretical analysis from the perspective of Bregman divergence,
while adopting a logistic regression approach in practice.
However, the density ratio may diverge when the supports of the two distributions are different,
leading to instability in DDRO.

To address this instability,
we employ the relative density ratio \citep{yamada2013relative},
which is the generalization of the density ratio.
As discussed in Section~\ref{sec:rdro},
the relative density ratio is bounded above and does not diverge,
enabling more stable estimation.
This property enables us to achieve training stability.
Analogous to DDRO,
we provide a theoretical analysis from the perspective of Bregman divergence,
and show that our approach yields significantly tighter convergence guarantees than DDRO.
Whereas existing relative density ratio estimators are based on least-squares \citep{yamada2013relative,kumagai2021meta},
we adopt logistic regression in practice,
since it has been empirically shown to yield better performance.

\section{Experiments}

To evaluate our approach,
we compared it against two state-of-the-art alignment methods:
KTO \citep{ethayarajh2024model} and DDRO \citep{higuchi2025direct}.

\subsection{Base Models, Preference Datasets, and Metrics}

We applied these alignment methods to the following base models:
Qwen 2.5 Instruct (\href{https://huggingface.co/Qwen/Qwen2.5-1.5B-Instruct}{1.5B} and \href{https://huggingface.co/Qwen/Qwen2.5-3B-Instruct}{3B}) \citep{team2024qwen2},
and Llama 3 Instruct (\href{https://huggingface.co/meta-llama/Llama-3.2-3B-Instruct}{3B} and \href{https://huggingface.co/meta-llama/Llama-3.1-8B-Instruct}{8B}) \citep{grattafiori2024llama}.
We refer to them as Qwen-1.5B, Qwen-3B, Llama-3B, and Llama-8B, respectively.

As the preference datasets, we used \href{https://huggingface.co/datasets/trl-lib/ultrafeedback-gpt-3.5-turbo-helpfulness}{UltraFeedback GPT-3.5-Turbo Helpfulness} (UF-G)
and \href{https://huggingface.co/datasets/trl-lib/kto-mix-14k}{KTO Mix 14K} (MIX-14K).
UF-G is derived from the \href{https://huggingface.co/datasets/openbmb/UltraFeedback}{UltraFeedback} \citep{cui2023ultrafeedback} and filtered for helpfulness,
containing preferred and non-preferred responses to various prompts labeled by GPT-3.5-turbo.
MIX-14K mixes the following three datasets:
\href{https://huggingface.co/datasets/argilla/distilabel-capybara-dpo-7k-binarized}{Capybara},
\href{https://huggingface.co/datasets/argilla/distilabel-intel-orca-dpo-pairs}{Orca},
and \href{https://huggingface.co/datasets/argilla/ultrafeedback-binarized-preferences-cleaned}{UltraFeedback} by sampling from each source in equal proportions.
UF-G contains 6,088 preferred and 9,644 non-preferred samples,
while MIX-14K contains 6,750 preferred and 6,750 non-preferred samples.

As the metric, we used \href{https://github.com/tatsu-lab/alpaca_eval}{AlpacaEval length-controlled (LC) win rates} \citep{dubois2024length}.
We generated responses from the aligned models on the \href{https://huggingface.co/datasets/tatsu-lab/alpaca_eval}{alpaca\_eval} prompts,
computed win rates against a baseline using an LLM-based auto-annotator,
and reported LC win rates that regress out AlpacaEval’s bias toward longer outputs.
We used \texttt{gpt-4-turbo-2024-04-09} for the baseline and \texttt{weighted\_alpaca\_eval\_gpt4\_turbo} for the auto-annotator.

\subsection{Setup}

We implemented all methods using the \href{https://github.com/huggingface/trl}{Hugging Face TRL library}.
We used the KTO implementation in TRL and implemented DDRO and the proposed method by extending this KTO implementation.
For training, we used AdamW \citep{loshchilov2017decoupled} with a cosine learning rate scheduler \citep{loshchilov2016sgdr}.
Following the \href{https://huggingface.co/docs/trl/kto_trainer}{TRL documentation},
we trained all models for one epoch with a learning rate of $5\times10^{-7}$ and a batch size of 128.
We set the maximum gradient norm for gradient clipping to 1.0 and used a warmup ratio of 10\% of the total training steps.

For DDRO and the proposed method,
we set the hyperparameter $\alpha$ to the fraction of preferred samples in each dataset:
0.39 for UF-G and 0.5 for MIX-14K.

The experiments were conducted on machines equipped with NVIDIA H100 GPUs (80GB HBM3),
Intel Xeon Platinum 8480+ CPUs, and 2.0TB of memory.
We performed distributed training over 8 GPUs using \href{https://github.com/huggingface/accelerate}{Hugging Face Accelerate} and enabled memory-efficient training with DeepSpeed ZeRO-3 \citep{rajbhandari2020zero}.
We repeated each experiment three times with different random seeds.

\subsection{Results}

\begin{table*}[tb]
\caption{
  Comparison of AlpacaEval LC Win Rates on UF-G.
}
\begin{center}
\begin{tabular}{l|rrr}
\toprule
 & KTO & DDRO & RDRO \\
\midrule
Qwen-1.5B & \textbf{5.220 $\pm$ 0.104} & \textbf{5.364 $\pm$ 0.162} & \textbf{5.599 $\pm$ 0.122} \\
Qwen-3B & 7.920 $\pm$ 0.082 & \textbf{7.888 $\pm$ 0.088} & \textbf{8.933 $\pm$ 0.318} \\
Llama-3B & 9.937 $\pm$ 0.024 & 12.069 $\pm$ 0.113 & \textbf{12.634 $\pm$ 0.101} \\
Llama-8B & 9.344 $\pm$ 0.296 & 15.660 $\pm$ 0.343 & \textbf{17.314 $\pm$ 0.078} \\
\bottomrule
\end{tabular}
\label{tab:winrates_UF-G}
\end{center}
\end{table*}

\begin{table*}[tb]
\caption{
  Comparison of AlpacaEval LC Win Rates on MIX-14K.
}
\begin{center}
\begin{tabular}{l|rrr}
\toprule
 & KTO & DDRO & RDRO \\
\midrule
Qwen-1.5B & \textbf{5.320 $\pm$ 0.173} & \textbf{5.169 $\pm$ 0.166} & \textbf{5.297 $\pm$ 0.102} \\
Qwen-3B & \textbf{8.240 $\pm$ 0.124} & \textbf{8.458 $\pm$ 0.138} & \textbf{8.793 $\pm$ 0.238} \\
Llama-3B & \textbf{12.763 $\pm$ 0.190} & \textbf{12.487 $\pm$ 0.156} & \textbf{12.720 $\pm$ 0.205} \\
Llama-8B & 13.921 $\pm$ 0.041 & \textbf{17.269 $\pm$ 0.675} & \textbf{17.102 $\pm$ 0.305} \\
\bottomrule
\end{tabular}
\label{tab:winrates_MIX-14K}
\end{center}
\end{table*}

Tables \ref{tab:winrates_UF-G} and \ref{tab:winrates_MIX-14K} show the AlpacaEval LC win rates on UF-G and MIX-14K, respectively.
We used boldface to indicate the best results and statistically non-different results according to a pairwise t-test with a significance level of 5\%.

We first focus on the results on UF-G.
Table~\ref{tab:winrates_UF-G} shows that,
across all models, the proposed method achieves performance comparable to or better than both KTO and DDRO.
In particular, for Llama-3B and Llama-8B,
the proposed method yields a statistically significant improvement over KTO and DDRO.

We next focus on the results on MIX-14K.
Table~\ref{tab:winrates_MIX-14K} shows that,
across all models,
the proposed method achieves comparable performance to KTO and DDRO.

These differences in results may be due to differences in the properties of the datasets.
MIX-14K is derived from \href{https://huggingface.co/datasets/argilla/dpo-mix-7k}{DPO-Mix-7K},
which was originally created for DPO \citep{rafailov2023direct},
and consists of triplet samples $(x, y_{w}, y_{l})$,
where $x$ is a prompt,
$y_{w}$ is a preferred response,
and $y_{l}$ is a non-preferred response.
MIX-14K decomposes each triplet sample into two prompt and response pairs: $(x, y_{w})$ and $(x, y_{l})$.
That is, for every prompt,
both a preferred and a non-preferred response are always available.
This makes alignment on MIX-14K relatively easy, and hence, all alignment methods achieve competitive performance.

On the other hand,
UF-G does not have this property: for a given prompt $x$, only one of $y_{w}$ or $y_{l}$ is available.
This makes alignment on UF-G more challenging and may result in larger performance differences among alignment methods.
In practice, it is difficult to always provide both preferred and non-preferred responses for each prompt.
Therefore, achieving strong performance on realistic datasets like UF-G is particularly important.
Additional results on BIG-Bench Hard \citep{suzgun2023challenging} are provided in Appendix~\ref{sec:bbh}.

\subsection{Hyperparameter Sensitivity Analysis}

\begin{figure*}[tb]
  \centering
  \begin{subfigure}{0.245\textwidth}
    \includegraphics[width=\linewidth]{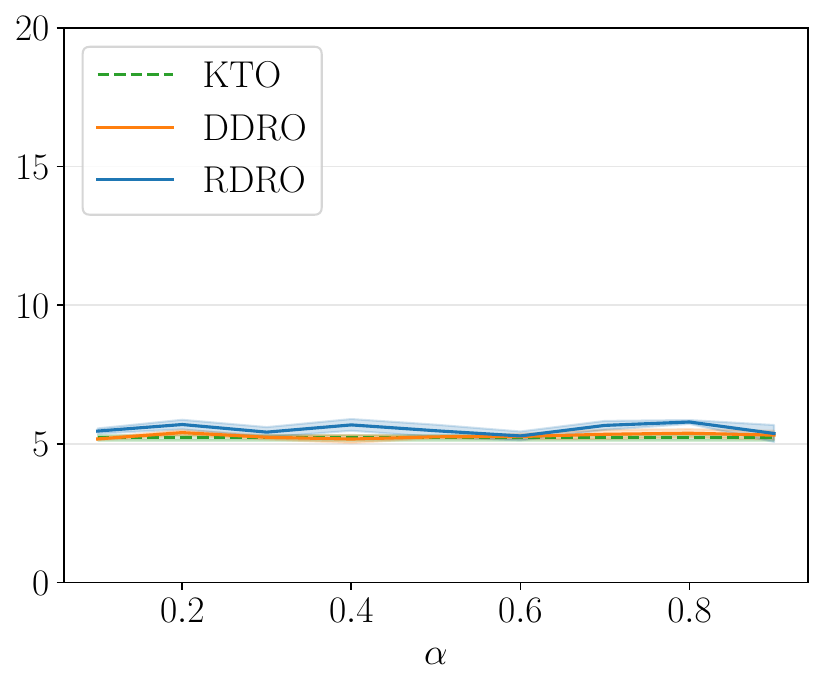}
    \caption{Qwen-1.5B.}
    \label{fig:alpha_Qwen-1.5B_UF-G}
  \end{subfigure}
  \begin{subfigure}{0.245\textwidth}
    \includegraphics[width=\linewidth]{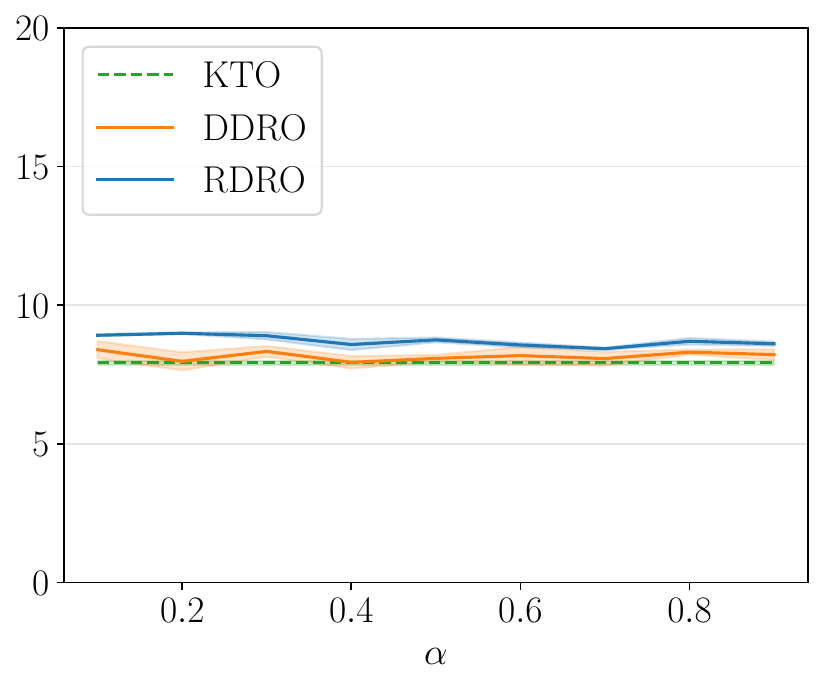}
    \caption{Qwen-3B.}
    \label{fig:alpha_Qwen-3B_UF-G}
  \end{subfigure}
  \begin{subfigure}{0.245\textwidth}
    \includegraphics[width=\linewidth]{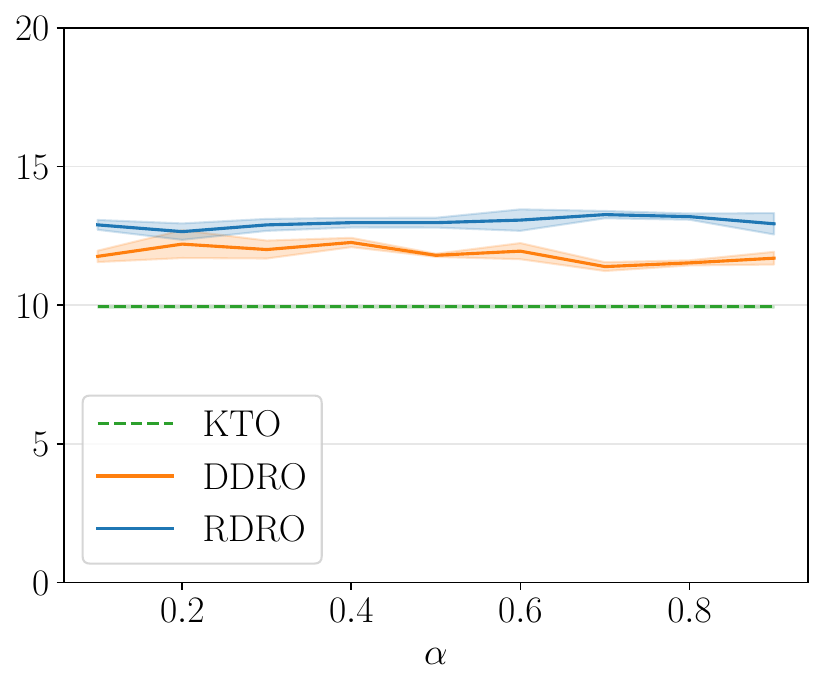}
    \caption{Llama-3B.}
    \label{fig:alpha_Llama-3B_UF-G}
  \end{subfigure}
  \begin{subfigure}{0.245\textwidth}
    \includegraphics[width=\linewidth]{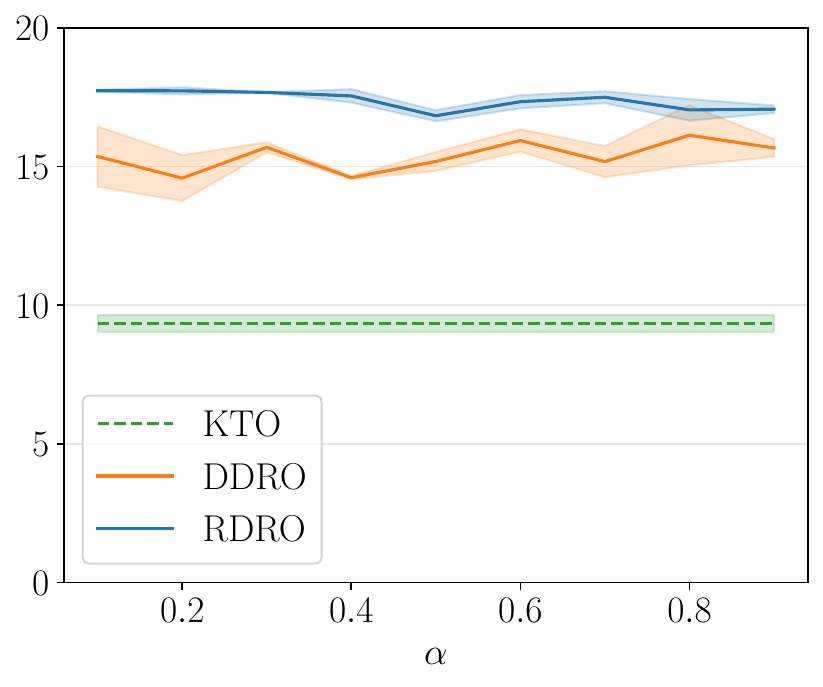}
    \caption{Llama-8B.}
    \label{fig:alpha_Llama-8B_UF-G}
  \end{subfigure}
  \caption{
    Relationship between AlpacaEval LC win rates and the hyperparameter $\alpha$ on UF-G.
    The semi-transparent area represents standard deviations.
  }
  \label{fig:alpha_UF-G}
\end{figure*}

\begin{figure*}[tb]
  \centering
  \begin{subfigure}{0.245\textwidth}
    \includegraphics[width=\linewidth]{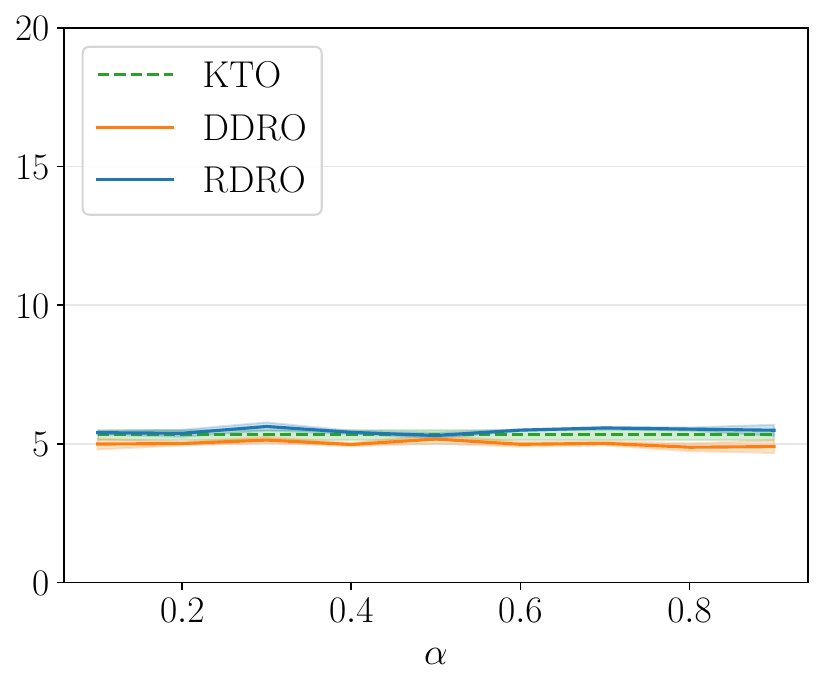}
    \caption{Qwen-1.5B.}
    \label{fig:alpha_Qwen-1.5B_MIX-14K}
  \end{subfigure}
  \begin{subfigure}{0.245\textwidth}
    \includegraphics[width=\linewidth]{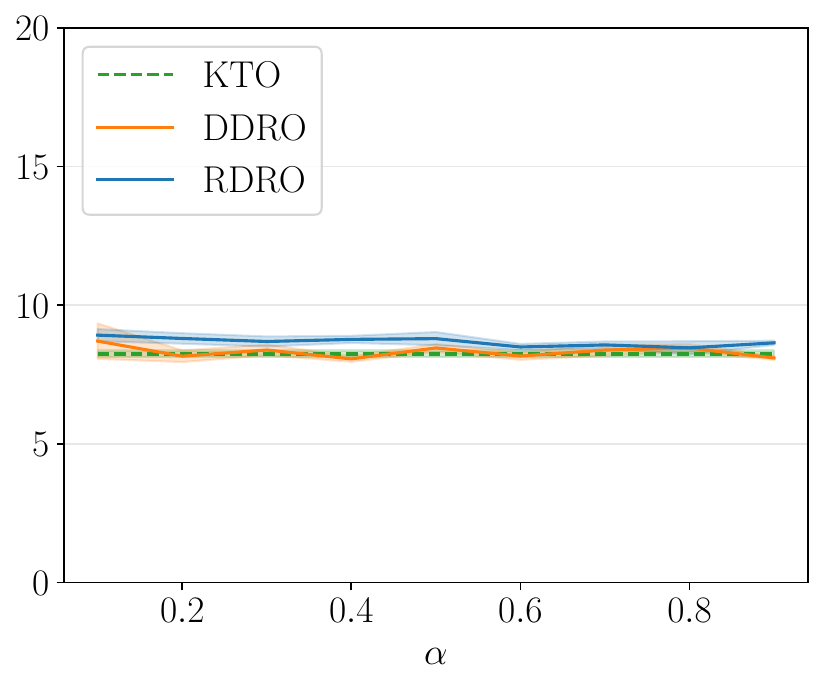}
    \caption{Qwen-3B.}
    \label{fig:alpha_Qwen-3B_MIX-14K}
  \end{subfigure}
  \begin{subfigure}{0.245\textwidth}
    \includegraphics[width=\linewidth]{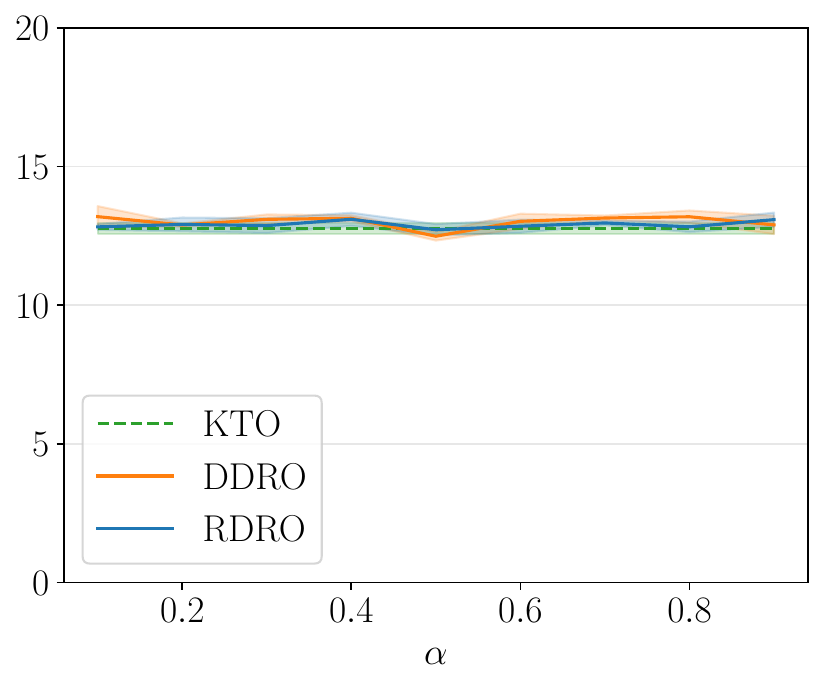}
    \caption{Llama-3B.}
    \label{fig:alpha_Llama-3B_MIX-14K}
  \end{subfigure}
  \begin{subfigure}{0.245\textwidth}
    \includegraphics[width=\linewidth]{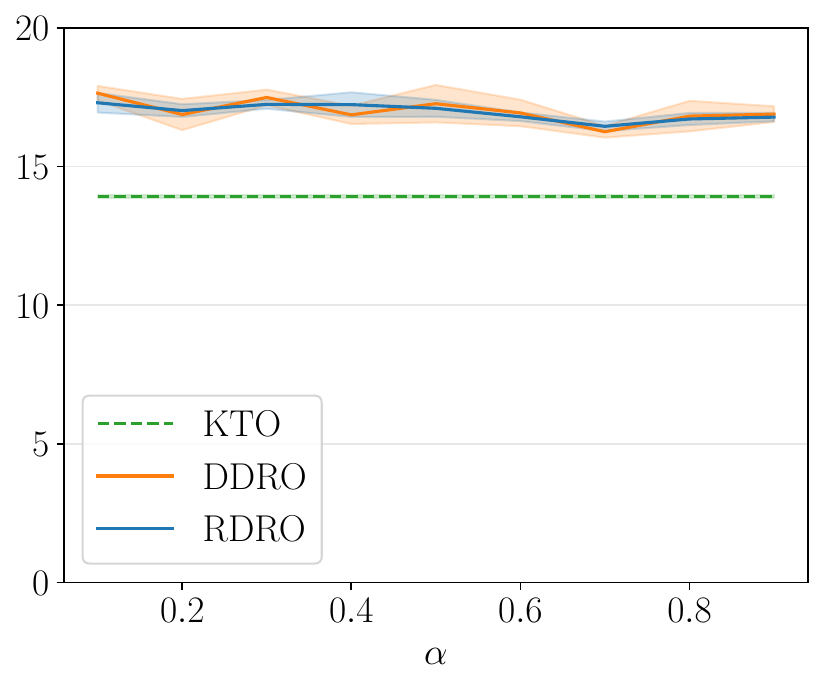}
    \caption{Llama-8B.}
    \label{fig:alpha_Llama-8B_MIX-14K}
  \end{subfigure}
  \caption{
    Relationship between AlpacaEval LC win rates and the hyperparameter $\alpha$ on MIX-14K.
    The semi-transparent area represents standard deviations.
  }
  \label{fig:alpha_MIX-14K}
\end{figure*}

Finally, we investigated the sensitivity of the hyperparameter $\alpha$ in the proposed method.
Figures \ref{fig:alpha_Qwen-1.5B_UF-G}, \ref{fig:alpha_Qwen-3B_UF-G}, \ref{fig:alpha_Llama-3B_UF-G},
and \ref{fig:alpha_Llama-8B_UF-G} show the relationship between AlpacaEval LC win rates and $\alpha$ for Qwen-1.5B, Qwen-3B, Llama-3B, and Llama-8B on UF-G, respectively.
Overall, performance varies only slightly with $\alpha$, indicating stable behavior of the proposed method.
Based on these results, we recommend setting $\alpha$ to the fraction of preferred samples in each dataset.
The results on MIX-14K are provided in Figures \ref{fig:alpha_Qwen-1.5B_MIX-14K}, \ref{fig:alpha_Qwen-3B_MIX-14K}, \ref{fig:alpha_Llama-3B_MIX-14K}, and \ref{fig:alpha_Llama-8B_MIX-14K}.
On MIX-14K, there was no statistically significant difference between RDRO and DDRO for most values of $\alpha$.

These results indicate that the proposed method performs robustly across a range of base models and datasets,
and is a promising alternative to KTO and DDRO.
We also provide an empirical analysis of the training dynamics of the proposed method in Appendix~\ref{sec:training_dynamics}.

\section{Conclusion}

In this paper, we propose a novel alignment method called relative density ratio optimization (RDRO),
which is more stable and statistically consistent, and yields significantly tighter convergence guarantees than DDRO.
Experiments across various models and datasets demonstrate that our RDRO achieves performance comparable to or better than KTO and DDRO.

\bibliography{main}
\bibliographystyle{plainnat}

\newpage
\appendix

\section{PyTorch Implementation}
\label{sec:implementation}

RDRO is easy to implement in practice;
PyTorch code for the RDRO loss is provided below:
\begin{verbatim}
import torch
from torch.nn.functional import softplus


def rdro_loss(
    policy_chosen_logps,
    policy_rejected_logps,
    reference_chosen_logps,
    reference_rejected_logps,
    alpha,
):
    """
    policy_chosen_logps: policy logprobs for chosen responses
    policy_rejected_logps: policy logprobs for rejected responses
    reference_chosen_logps: reference policy logprobs for chosen responses
    reference_rejected_logps: reference policy logprobs for rejected responses
    alpha: hyperparameter in (0, 1)
    """

    chosen_logratios = policy_chosen_logps - reference_chosen_logps
    chosen_losses = (1 + alpha) * softplus(chosen_logratios) - chosen_logratios

    rejected_logratios = policy_rejected_logps - reference_rejected_logps
    rejected_losses = (1 - alpha) * softplus(rejected_logratios)

    return torch.cat([chosen_losses, rejected_losses], dim=0)
\end{verbatim}

\section{Proof of Theorem \ref{thm:consistency}}
\label{sec:proof}

In this section, we will prove Theorem~\ref{thm:consistency} after providing four relevant lemmas.
For simplicity,
we denote the true risk $\mathcal{L}_{\mathrm{RDRE}}(\theta)$ as $\mathcal{L}(\theta)$,
and the empirical risk $\hat{\mathcal{L}}_{\mathrm{RDRE}}(\theta)$ as $\hat{\mathcal{L}}(\theta)$.

\begin{lemma}
$\mathcal{L}(\theta)$ is bounded below as follows:
\begin{align}
\mathcal{L}(\theta)=\mathbb{E}_{p_{\mathrm{ref}}}\left[\mathrm{Breg}_{f}\left(r^{*}\Vert r_{\theta}\right)\right]\geq\frac{\mu}{2}\mathbb{E}_{p_{\mathrm{ref}}}\left[\left(r^{*}-r_{\theta}\right)^{2}\right],
\end{align}
where $\mu$ satisfies $f^{\prime\prime}(t)\geq\mu$ for all $t$.
\label{lemma:1}
\end{lemma}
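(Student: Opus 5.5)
The plan is to prove the inequality pointwise and then integrate. For fixed $(x,y)$, set $u=r^{*}(y|x)$ and $v=r_{\theta}(y|x)$. It then suffices to show
\begin{align}
\mathrm{Breg}_{f}\left(u\Vert v\right)\geq\frac{\mu}{2}(u-v)^{2}
\end{align}
for all $u,v$ in the domain of $f$. Taking the expectation under $p_{\mathrm{ref}}(x,y)$ of both sides, and using monotonicity and linearity of expectation, gives the claim.

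For the pointwise inequality, I would use Taylor's theorem with the integral form of the remainder around the point $v$. Since $f$ is twice differentiable with $f''\geq\mu$ (as the statement assumes),
\begin{align}
f(u)=f(v)+f^{\prime}(v)(u-v)+\int_{0}^{1}(1-s)f^{\prime\prime}(v+s(u-v))\,ds\,(u-v)^{2}.
\end{align}
Rearranging, $\mathrm{Breg}_{f}(u\Vert v)$ equals exactly the remainder term. Bounding $f''$ below by $\mu$ and using $\int_{0}^{1}(1-s)\,ds=1/2$ yields the bound.

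Alternatively, one can avoid second derivatives. Define $h(t)=f(t)-\frac{\mu}{2}t^{2}$. Then $h$ is convex because $h''=f''-\mu\geq 0$, so $\mathrm{Breg}_{h}(u\Vert v)\geq 0$. The Bregman divergence is linear in its generator, so
\begin{align}
\mathrm{Breg}_{f}=\mathrm{Breg}_{h}+\frac{\mu}{2}\mathrm{Breg}_{t^{2}},
\end{align}
and a direct computation gives $\mathrm{Breg}_{t^{2}}(u\Vert v)=(u-v)^{2}$.

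The only delicate point is the domain. For the logistic choice $f(t)=t\log t-(1+t)\log(1+t)$, we have $f''(t)=1/(t(1+t))$. This is positive on $(0,\infty)$ but not uniformly bounded below there, and it is bounded below only on bounded intervals. So I would read the hypothesis ``$f''(t)\geq\mu$ for all $t$'' as holding on the convex set containing the values of $r^{*}$ and $r_{\theta}$; for instance, $[0,1/\alpha]$ once the ratios are restricted to that range. The Taylor argument only needs $f''\geq\mu$ on the segment between $u$ and $v$, which lies in that set by convexity. I would state this explicitly so the lemma is valid as used, and treat the rest as routine.
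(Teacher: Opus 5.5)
Your proof is correct and is essentially the paper's argument. The paper directly invokes the strong-convexity inequality $f(r^{*})\geq f(r_{\theta})+f^{\prime}(r_{\theta})(r^{*}-r_{\theta})+\frac{\mu}{2}(r^{*}-r_{\theta})^{2}$ pointwise, rearranges it into $\mathrm{Breg}_{f}(r^{*}\Vert r_{\theta})\geq\frac{\mu}{2}(r^{*}-r_{\theta})^{2}$, and takes expectations under $p_{\mathrm{ref}}$; you do the same but additionally derive that inequality from $f^{\prime\prime}\geq\mu$. Your domain caveat is also accurate and is left implicit in the paper: for the logistic $f$, $f^{\prime\prime}(t)=1/(t(1+t))\to0$ as $t\to\infty$, so a positive $\mu$ exists only if $r^{*}$ and $r_{\theta}$ lie in a bounded interval, which holds for $r^{*}\leq1/\alpha$ but is an extra assumption for $r_{\theta}$.
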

\begin{proof}
Since $f$ is a strongly convex function,
the following inequality holds:
\begin{align}
f(r^{*})\geq f(r_{\theta})+f^{\prime}(r_{\theta})(r^{*}-r_{\theta})+\frac{\mu}{2}\left(r^{*}-r_{\theta}\right)^{2},
\end{align}
and thus,
\begin{align}
\mathrm{Breg}_{f}\left(r^{*}\Vert r_{\theta}\right)
=f(r^{*})-f(r_{\theta})-f^{\prime}(r_{\theta})(r^{*}-r_{\theta})
\geq\frac{\mu}{2}\left(r^{*}-r_{\theta}\right)^{2}.
\end{align}
Taking expectations on both sides yields the claim.
\end{proof}

\begin{lemma}
The squared error between $p_{\theta}$ and $p^{+}$ can be bounded by $\mathcal{L}(\theta)$ as follows:
\begin{align}
\mathbb{E}_{p^{+}}\left[\left(p_{\theta}-p^{+}\right)^{2}\right]\leq\frac{2}{\alpha\mu}\mathcal{L}(\theta).
\end{align}
\label{lemma:2}
\end{lemma}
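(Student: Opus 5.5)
We prove the bound pointwise in $(x,y)$ and then take expectations, reducing everything to Lemma~\ref{lemma:1}. The starting point is the identities $p^{+}=r^{*}p_{\mathrm{ref}}$ and $p_{\theta}=r_{\theta}p_{\mathrm{ref}}$, which follow directly from the definitions of $r^{*}$ and $r_{\theta}$. They give $p_{\theta}-p^{+}=p_{\mathrm{ref}}(r_{\theta}-r^{*})$. The plan is then to convert the expectation under $p^{+}$ on the left-hand side into an expectation under $p_{\mathrm{ref}}$, at the cost of a factor $1/\alpha$ and a factor $p_{\mathrm{ref}}^{2}\le 1$.

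Concretely, for fixed $x$, writing the expectation as a sum over responses (the paper treats $p_{\mathrm{ref}}(\cdot|x)$ as a discrete distribution), we have
\begin{align}
\mathbb{E}_{p^{+}(y|x)}\left[\left(p_{\theta}-p^{+}\right)^{2}\right]
=\sum_{y}r^{*}(y|x)\,p_{\mathrm{ref}}(y|x)\,p_{\mathrm{ref}}(y|x)^{2}\left(r_{\theta}(y|x)-r^{*}(y|x)\right)^{2}.
\end{align}
We then bound the factors in turn:
\begin{itemize}
\item Use $r^{*}\le 1/\alpha$, which holds because $p_{\mathrm{ref}}\ge\alpha p^{+}$.
\item Use $p_{\mathrm{ref}}(y|x)^{2}\le 1$, since a discrete probability is at most one.
\end{itemize}
This yields $\mathbb{E}_{p^{+}(y|x)}[(p_{\theta}-p^{+})^{2}]\le\frac{1}{\alpha}\mathbb{E}_{p_{\mathrm{ref}}(y|x)}[(r_{\theta}-r^{*})^{2}]$. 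Integrating over $p(x)$, which is shared by $p^{+}(x,y)$ and $p_{\mathrm{ref}}(x,y)$, gives the same inequality for the joint expectations.

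Finally, Lemma~\ref{lemma:1} gives $\mathbb{E}_{p_{\mathrm{ref}}}[(r^{*}-r_{\theta})^{2}]\le\frac{2}{\mu}\mathcal{L}(\theta)$. Chaining this with the previous inequality gives $\mathbb{E}_{p^{+}}[(p_{\theta}-p^{+})^{2}]\le\frac{2}{\alpha\mu}\mathcal{L}(\theta)$, as claimed.

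The main technical point is the change of measure from $p^{+}$ to $p_{\mathrm{ref}}$ together with the bound $p_{\mathrm{ref}}\le 1$. Two things need care. First, the bound relies on discreteness of the response space; for densities one would instead need a uniform bound on $p_{\mathrm{ref}}$, which would enter the constant. Second, points where $p_{\mathrm{ref}}=0$ contribute nothing: there $p^{+}=0$ as well, because $p_{\mathrm{ref}}\ge\alpha p^{+}$, so the identities above remain valid on the support that matters.
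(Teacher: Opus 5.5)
Your proposal is correct and matches the paper's proof step for step. Both use the identity $p_{\theta}-p^{+}=p_{\mathrm{ref}}(r_{\theta}-r^{*})$, change measure via $p^{+}=r^{*}p_{\mathrm{ref}}$, apply $r^{*}\le 1/\alpha$ and $p_{\mathrm{ref}}\le 1$, and finish with Lemma~\ref{lemma:1}; your remarks on the zero set of $p_{\mathrm{ref}}$ and on the reliance on a discrete response space are sound points that the paper leaves implicit.
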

\begin{proof}
The squared error between $p_{\theta}$ and $p^{+}$ is bounded above as follows:
\begin{align}
\mathbb{E}_{p^{+}}\left[\left(p_{\theta}-p^{+}\right)^{2}\right]
&=\mathbb{E}_{p^{+}}\left[p_{\mathrm{ref}}^{2}\left(r_{\theta}-r^{*}\right)^{2}\right]\\
&=\mathbb{E}_{p_{\mathrm{ref}}}\left[r^{*}p_{\mathrm{ref}}^{2}\left(r_{\theta}-r^{*}\right)^{2}\right]\\
&\leq\frac{1}{\alpha}\mathbb{E}_{p_{\mathrm{ref}}}\left[p_{\mathrm{ref}}^{2}\left(r_{\theta}-r^{*}\right)^{2}\right]\\
&\leq\frac{1}{\alpha}\mathbb{E}_{p_{\mathrm{ref}}}\left[\left(r_{\theta}-r^{*}\right)^{2}\right],
\end{align}
where we used $p_{\theta}-p^{+}=p_{\mathrm{ref}}(r_{\theta}-r^{*})$,
$p^{+}=r^{*}p_{\mathrm{ref}}$,
$r^{*} \leq 1/\alpha$,
and $p_{\mathrm{ref}}\leq 1$ since $p_{\mathrm{ref}}$ is a discrete probability distribution.
Combining with Lemma~\ref{lemma:1} yields the claim.
\end{proof}

\begin{lemma}
For $\hat{\theta}$ that satisfies $\hat{\theta}=\argmin_{\theta}\hat{\mathcal{L}}(\theta)$,
$\mathcal{L}(\hat{\theta})$ is bounded above as follows:
\begin{align}
\mathcal{L}(\hat{\theta})\leq\inf_{\theta\in\Theta}\mathcal{L}(\theta)+\sup_{\theta\in\Theta}\left(\mathcal{L}(\theta)-\hat{\mathcal{L}}(\theta)\right)+\sup_{\theta\in\Theta}\left(\hat{\mathcal{L}}(\theta)-\mathcal{L}(\theta)\right).
\end{align}
\label{lemma:3}
\end{lemma}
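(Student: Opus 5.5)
The plan is the standard excess-risk decomposition. Pick any $\theta\in\Theta$ and split
\begin{align}
\mathcal{L}(\hat{\theta})-\mathcal{L}(\theta)
=\left(\mathcal{L}(\hat{\theta})-\hat{\mathcal{L}}(\hat{\theta})\right)
+\left(\hat{\mathcal{L}}(\hat{\theta})-\hat{\mathcal{L}}(\theta)\right)
+\left(\hat{\mathcal{L}}(\theta)-\mathcal{L}(\theta)\right).
\end{align}
The three pieces are handled separately.

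First, because $\hat{\theta}$ minimizes $\hat{\mathcal{L}}$ over $\Theta$, we have $\hat{\mathcal{L}}(\hat{\theta})\leq\hat{\mathcal{L}}(\theta)$. The middle term is therefore nonpositive and can be dropped.

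Second, the first term is evaluated at the point $\hat{\theta}\in\Theta$. It is thus at most $\sup_{\theta'\in\Theta}\left(\mathcal{L}(\theta')-\hat{\mathcal{L}}(\theta')\right)$.

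Third, the last term is evaluated at the fixed $\theta\in\Theta$. It is likewise at most $\sup_{\theta'\in\Theta}\left(\hat{\mathcal{L}}(\theta')-\mathcal{L}(\theta')\right)$.

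Combining these, we get
\begin{align}
\mathcal{L}(\hat{\theta})\leq\mathcal{L}(\theta)+\sup_{\theta'\in\Theta}\left(\mathcal{L}(\theta')-\hat{\mathcal{L}}(\theta')\right)+\sup_{\theta'\in\Theta}\left(\hat{\mathcal{L}}(\theta')-\mathcal{L}(\theta')\right)
\end{align}
for every $\theta\in\Theta$. The two supremum terms do not depend on $\theta$, so we can take the infimum over $\theta\in\Theta$ on the right-hand side, which gives the claim.

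The argument is purely deterministic and needs no property of the Bregman loss. The only delicate point is bookkeeping. If the infimum is not attained or the suprema could be infinite, the inequality must be read in the extended reals. In that case I would argue with an arbitrary $\theta$ first and pass to the infimum only at the end, exactly as above, rather than picking a minimizer of $\mathcal{L}$. There is no real obstacle here. The substantive work, namely bounding the expected suprema by Rademacher complexities, is left to the subsequent lemma.
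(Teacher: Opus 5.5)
Your proof is correct and matches the paper's: the paper writes $\mathcal{L}(\hat{\theta})$ as $\mathcal{L}(\theta)$ plus the same three differences, drops $\hat{\mathcal{L}}(\hat{\theta})-\hat{\mathcal{L}}(\theta)\leq 0$ by minimality of $\hat{\theta}$, bounds the other two terms by the suprema over $\Theta$, and then takes the infimum over $\theta$. Your remark about fixing an arbitrary $\theta$ before passing to the infimum, rather than assuming a minimizer of $\mathcal{L}$ exists, is exactly how the paper handles it.
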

\begin{proof}
$\mathcal{L}(\hat{\theta})$ is bounded above as follows:
\begin{align}
\mathcal{L}(\hat{\theta})
&=\mathcal{L}(\hat{\theta})+\mathcal{L}(\theta)-\mathcal{L}(\theta)+\hat{\mathcal{L}}(\hat{\theta})-\hat{\mathcal{L}}(\hat{\theta})+\hat{\mathcal{L}}(\theta)-\hat{\mathcal{L}}(\theta)\\
&=\mathcal{L}(\theta)+\left(\mathcal{L}(\hat{\theta})-\hat{\mathcal{L}}(\hat{\theta})\right)+\left(\hat{\mathcal{L}}(\theta)-\mathcal{L}(\theta)\right)+\left(\hat{\mathcal{L}}(\hat{\theta})-\hat{\mathcal{L}}(\theta)\right)\\
&\leq\mathcal{L}(\theta)+\sup_{\theta\in\Theta}\left(\mathcal{L}(\theta)-\hat{\mathcal{L}}(\theta)\right)+\sup_{\theta\in\Theta}\left(\hat{\mathcal{L}}(\theta)-\mathcal{L}(\theta)\right),
\end{align}
where we used $\hat{\mathcal{L}}(\hat{\theta})-\hat{\mathcal{L}}(\theta)\leq0$.
Taking the infimum over $\theta\in\Theta$ on both sides yields the claim.
\end{proof}

\begin{lemma}
The uniform deviation between the empirical risk $\hat{\mathcal{L}}(\theta)$ and the true risk $\mathcal{L}(\theta)$ is bounded above as follows:
\begin{align}
\mathbb{E}_{\mathcal{D}^{\pm}}\left[\sup_{\theta\in\Theta}\left(\hat{\mathcal{L}}(\theta)-\mathcal{L}(\theta)\right)\right]\leq2C_{\mathrm{Lip}}\left\{ \alpha\mathcal{R}_{N}(\mathcal{H})+(1-\alpha)\mathcal{R}_{M}(\mathcal{H})\right\},
\end{align}
where $\mathcal{D}^{\pm}=\left\{ \mathcal{D}^{+},\mathcal{D}^{-}\right\}$ is the dataset,
$C_{\mathrm{Lip}}$ is the Lipschitz constant of the Bregman divergence,
and $\mathcal{R}_{N}(\mathcal{H})$ and $\mathcal{R}_{M}(\mathcal{H})$ are Rademacher complexities of the model class $\mathcal{H}=\left\{ p_{\theta}\mid\theta\in\Theta\right\}$ with sample sizes $N$ and $M$, respectively.
\label{lemma:4}
\end{lemma}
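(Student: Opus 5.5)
We prove Lemma~\ref{lemma:4} with the standard symmetrization argument, applied separately to the two independent samples, followed by Talagrand's contraction lemma. The first step is to write the true risk in the Bregman form:
\begin{align}
\mathcal{L}(\theta)=\mathbb{E}_{p_{\mathrm{ref}}}\left[\psi_{1}(r_{\theta})+\psi_{2}(r_{\theta})r^{*}\right]+\mathrm{Const.},
\end{align}
where the constant $\mathbb{E}_{p_{\mathrm{ref}}}[f(r^{*})]$ does not depend on $\theta$ and cancels in $\hat{\mathcal{L}}-\mathcal{L}$. Using $p_{\mathrm{ref}}=\alpha p^{+}+(1-\alpha)p^{-}$, we split $\mathcal{L}(\theta)$ into two parts:
\begin{align}
\mathcal{L}(\theta)=\alpha\,\mathbb{E}_{p^{+}}[\ell_{\theta}]+(1-\alpha)\,\mathbb{E}_{p^{-}}[\ell_{\theta}]+\mathrm{Const.},
\qquad \ell_{\theta}=\psi_{1}(r_{\theta})+\psi_{2}(r_{\theta})r^{*}.
\end{align}
The empirical risk $\hat{\mathcal{L}}$ is the corresponding empirical version, with weight $\alpha$ on the average over $\mathcal{D}^{+}$ and weight $1-\alpha$ on the average over $\mathcal{D}^{-}$. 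This is exactly how Eq.~(\ref{eq:proposed}) arises.

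Next, the supremum of a sum is at most the sum of the suprema. This gives
\begin{align}
\mathbb{E}_{\mathcal{D}^{\pm}}\Bigl[\sup_{\theta}(\hat{\mathcal{L}}-\mathcal{L})\Bigr]\leq\alpha\,\mathbb{E}_{\mathcal{D}^{+}}\Bigl[\sup_{\theta}\Bigl(\tfrac{1}{N}\textstyle\sum_{n}\ell_{\theta}(x_{n}^{+},y_{n}^{+})-\mathbb{E}_{p^{+}}\ell_{\theta}\Bigr)\Bigr]+(1-\alpha)\,\mathbb{E}_{\mathcal{D}^{-}}\Bigl[\sup_{\theta}(\cdots)\Bigr].
\end{align}
For each term we introduce a ghost sample and Rademacher variables. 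The standard symmetrization inequality then bounds each expected supremum by $2\mathcal{R}_{K}(\ell\circ\mathcal{H})$, with $K=N$ or $K=M$ respectively.

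Then we remove the loss by contraction. For fixed $(x,y)$, the map $v\mapsto\psi_{1}(v)+\psi_{2}(v)r^{*}(y|x)$ is Lipschitz with constant at most $L_{1}+(\sup|r^{*}|)L_{2}\leq L_{1}+L_{2}/\alpha=C_{\mathrm{Lip}}$, because $r^{*}\leq 1/\alpha$. Talagrand's contraction lemma, in the version that allows a pointwise-varying Lipschitz function with a common constant, gives $\mathcal{R}_{K}(\ell\circ\mathcal{H})\leq C_{\mathrm{Lip}}\mathcal{R}_{K}(\mathcal{H})$. Combining the two terms yields $2C_{\mathrm{Lip}}\{\alpha\mathcal{R}_{N}(\mathcal{H})+(1-\alpha)\mathcal{R}_{M}(\mathcal{H})\}$.

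The main obstacle is making the contraction step match the paper's convention for $\mathcal{H}$. The loss is a function of $r_{\theta}=p_{\theta}/p_{\mathrm{ref}}$, not of $p_{\theta}$ directly. The cleanest fix is to read $\mathcal{R}_{K}(\mathcal{H})$ as the Rademacher complexity of the class $\{r_{\theta}\}$, which matches the convention inherited from \citep{higuchi2025direct}. Alternatively, we can absorb the fixed, sample-dependent factor $1/p_{\mathrm{ref}}$ into the Lipschitz constant, since $C_{\mathrm{Lip}}$ is already defined as the Lipschitz constant of $\mathrm{Breg}_{f}(r^{*}\Vert\cdot)$. We would state this convention explicitly. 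We would also assume that $\psi_{1}$ and $\psi_{2}$ are Lipschitz on the range of $r_{\theta}$ (for instance, by restricting $r_{\theta}$ to a bounded interval), so that $L_{1}$ and $L_{2}$ are finite.
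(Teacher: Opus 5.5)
Your proposal is correct and follows the paper's proof step for step: the split via $p_{\mathrm{ref}}=\alpha p^{+}+(1-\alpha)p^{-}$, subadditivity of the supremum, symmetrization on each sample, and contraction with $C_{\mathrm{Lip}}=L_{1}+L_{2}/\alpha$. Your caveat that contraction with this constant actually yields the Rademacher complexity of $\{r_{\theta}\}$ rather than of $\{p_{\theta}\}$ is a genuine point the paper's proof passes over. Only your first fix keeps the stated constant, since absorbing $1/p_{\mathrm{ref}}$ multiplies it by $\sup 1/p_{\mathrm{ref}}$.

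One small correction: the weighted empirical average of $\ell_{\theta}$ still contains the unknown $r^{*}$, so it is not literally Eq.~(\ref{eq:proposed}). The per-sample losses of Eq.~(\ref{eq:proposed}) are $\alpha\psi_{1}(r_{\theta})+\psi_{2}(r_{\theta})$ on $\mathcal{D}^{+}$ and $(1-\alpha)\psi_{1}(r_{\theta})$ on $\mathcal{D}^{-}$. Your argument applied to these losses, measured against the logistic form of the risk, uses Lipschitz constants $\alpha C_{\mathrm{Lip}}$ and $(1-\alpha)L_{1}\leq(1-\alpha)C_{\mathrm{Lip}}$, and so yields the same bound.
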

\begin{proof}
Let $z=(x,y)$ and define $\ell_{\theta}(z)\equiv\mathrm{Breg}_{f}\left(r^{*}(z)\Vert r_{\theta}(z)\right)$.
With this notation,
we can rewrite $\mathcal{L}(\theta)$ and $\hat{\mathcal{L}}(\theta)$ as follows:
\begin{align}
\mathcal{L}(\theta)
&=\alpha\mathbb{E}_{p^{+}}\left[\ell_{\theta}(z)\right]+(1-\alpha)\mathbb{E}_{p^{-}}\left[\ell_{\theta}(z)\right],\\
\hat{\mathcal{L}}(\theta)
&=\alpha\frac{1}{N}\sum_{n=1}^{N}\ell_{\theta}(z_{n}^{+})+(1-\alpha)\frac{1}{M}\sum_{m=1}^{M}\ell_{\theta}(z_{m}^{-}),
\end{align}
where $\left(z_{n}^{+}\right)_{n=1}^{N}=\left(x_{n}^{+},y_{n}^{+}\right)_{n=1}^{N}=\mathcal{D}^{+}$ and $\left(z_{m}^{-}\right)_{m=1}^{M}=\left(x_{m}^{-},y_{m}^{-}\right)_{m=1}^{M}=\mathcal{D}^{-}$.
Then,
we obtain the following inequality:
\begin{align}
\sup_{\theta\in\Theta}\left(\hat{\mathcal{L}}(\theta)-\mathcal{L}(\theta)\right)
\leq\alpha\sup_{\theta\in\Theta}\left(\frac{1}{N}\sum_{n=1}^{N}\ell_{\theta}(z_{n}^{+})-\mathbb{E}_{p^{+}}\left[\ell_{\theta}(z)\right]\right)
+(1-\alpha)\sup_{\theta\in\Theta}\left(\frac{1}{M}\sum_{m=1}^{M}\ell_{\theta}(z_{m}^{-})-\mathbb{E}_{p^{-}}\left[\ell_{\theta}(z)\right]\right).
\end{align}
By the symmetrization trick \citep{mohri2018foundations} and the contraction inequality \citep{mohri2018foundations},
the uniform deviation is bounded above as follows:
\begin{align}
\mathbb{E}_{\mathcal{D}^{\pm}}\left[\sup_{\theta\in\Theta}\left(\hat{\mathcal{L}}(\theta)-\mathcal{L}(\theta)\right)\right]\leq2C_{\mathrm{Lip}}\left\{ \alpha\mathcal{R}_{N}(\mathcal{H})+(1-\alpha)\mathcal{R}_{M}(\mathcal{H})\right\},
\end{align}
where
\begin{align}
\mathcal{R}_{K}(\mathcal{H})=\mathbb{E}_{\mathcal{D}}\left[\mathbb{E}_{\sigma}\left[\sup_{\theta\in\Theta}\frac{1}{K}\sum_{k=1}^{K}\sigma_{k}p_{\theta}(z_{k})\right]\right]
\end{align}
is the Rademacher complexity of $\mathcal{H}$ with sample size $\left|\mathcal{D}\right|=K$,
and $\left(\sigma_{k}\right)_{k=1}^{K}$ are independent uniform random variables taking values in $\left\{ -1,+1\right\}$.
\end{proof}
In the same way, we can also obtain the following upper bound for the uniform deviation:
\begin{align}
\mathbb{E}_{\mathcal{D}^{\pm}}\left[\sup_{\theta\in\Theta}\left(\mathcal{L}(\theta)-\hat{\mathcal{L}}(\theta)\right)\right]
\leq2C_{\mathrm{Lip}}\left\{ \alpha\mathcal{R}_{N}(\mathcal{H})+(1-\alpha)\mathcal{R}_{M}(\mathcal{H})\right\}.
\end{align}

By combining Lemma~\ref{lemma:3} and Lemma~\ref{lemma:4}, we obtain the following upper bound:
\begin{align}
\mathbb{E}_{\mathcal{D}^{\pm}}\left[\mathcal{L}(\hat{\theta})\right]
&\leq\mathbb{E}_{\mathcal{D}^{\pm}}\left[\inf_{\theta\in\Theta}\mathcal{L}(\theta)\right]+\mathbb{E}_{\mathcal{D}^{\pm}}\left[\sup_{\theta\in\Theta}\left(\hat{\mathcal{L}}(\theta)-\mathcal{L}(\theta)\right)\right]+\mathbb{E}_{\mathcal{D}^{\pm}}\left[\sup_{\theta\in\Theta}\left(\mathcal{L}(\theta)-\hat{\mathcal{L}}(\theta)\right)\right]\\
&\leq\inf_{\theta\in\Theta}\mathcal{L}(\theta)+4C_{\mathrm{Lip}}\left\{ \alpha\mathcal{R}_{N}(\mathcal{H})+(1-\alpha)\mathcal{R}_{M}(\mathcal{H})\right\}.
\end{align}

By combining this with Lemma~\ref{lemma:2} evaluated at $\theta=\hat{\theta}$,
we obtain Theorem~\ref{thm:consistency}:
\begin{align}
\mathbb{E}_{\mathcal{D}^{\pm}}\left[\mathbb{E}_{p^{+}}\left[\left(p_{\hat{\theta}}-p^{+}\right)^{2}\right]\right]
&\leq\frac{2}{\alpha\mu}\mathbb{E}_{\mathcal{D}^{\pm}}\left[\mathcal{L}(\hat{\theta})\right]\\
&\leq\frac{2}{\alpha\mu}\left[\inf_{\theta\in\Theta}\mathcal{L}(\theta)+4C_{\mathrm{Lip}}\left\{ \alpha\mathcal{R}_{N}(\mathcal{H})+(1-\alpha)\mathcal{R}_{M}(\mathcal{H})\right\} \right].
\end{align}

\section{Canonical Violation of Bradley-Terry Assumption}
\label{sec:cyclic_preference}

Under the Bradley-Terry assumption,
for two responses $y_{1}$ and $y_{2}$ to a prompt $x$, the probability that $y_{1}$ is preferred over $y_{2}$ is defined as follows:
\begin{align}
\mathrm{Pr}_{\phi}(y_{1}\succ y_{2}|x)&=\sigma(R_{\phi}(x,y_{1})-R_{\phi}(x,y_{2})),
\end{align}
where $R_{\phi}(x,y)$ denotes the reward model.
Now consider a prompt $x$ with three responses $y_{a}$, $y_{b}$, $y_{c}$ and assume a cyclic preference pattern:
\begin{align}
\mathrm{Pr}_{\phi}(y_{a}\succ y_{b}\mid x)
=\mathrm{Pr}_{\phi}(y_{b}\succ y_{c}\mid x)
=\mathrm{Pr}_{\phi}(y_{c}\succ y_{a}\mid x)
=t \neq \tfrac{1}{2}.
\end{align}
Under the Bradley-Terry assumption, this would require
\begin{align}
R_{\phi}(x,y_{a})-R_{\phi}(x,y_{b})
=R_{\phi}(x,y_{b})-R_{\phi}(x,y_{c})
=R_{\phi}(x,y_{c})-R_{\phi}(x,y_{a}).
\end{align}
Taking the sum of these three equalities yields $0$, and hence each difference must be zero.
Therefore, $\mathrm{Pr}(y_{a}\succ y_{b}\mid x)=\sigma(0)=1/2$,
contradicting the assumption $t \neq 1/2$.

\section{Additional Results on BIG-Bench Hard}
\label{sec:bbh}

\begin{table*}[h]
\caption{
    Comparison of BIG-Bench Hard (3 shot) on UF-G.
}
\begin{center}
\begin{tabular}{l|rrr}
\toprule
 & KTO & DDRO & RDRO \\
\midrule
Qwen-1.5B & 0.252 $\pm$ 0.000 & 0.279 $\pm$ 0.001 & \textbf{0.286 $\pm$ 0.000} \\
Qwen-3B & 0.000 $\pm$ 0.000 & 0.004 $\pm$ 0.001 & \textbf{0.010 $\pm$ 0.000} \\
Llama-3B & 0.438 $\pm$ 0.000 & 0.531 $\pm$ 0.001 & \textbf{0.546 $\pm$ 0.001} \\
Llama-8B & 0.190 $\pm$ 0.012 & 0.615 $\pm$ 0.018 & \textbf{0.701 $\pm$ 0.000} \\
\bottomrule
\end{tabular}
\label{tab:bbh_UF-G}
\end{center}
\end{table*}

\begin{table*}[h]
\caption{
    Comparison of BIG-Bench Hard (3 shot) on MIX-14K.
}
\begin{center}
\begin{tabular}{l|rrr}
\toprule
 & KTO & DDRO & RDRO \\
\midrule
Qwen-1.5B & 0.243 $\pm$ 0.000 & 0.275 $\pm$ 0.001 & \textbf{0.284 $\pm$ 0.000} \\
Qwen-3B & \textbf{0.017 $\pm$ 0.000} & 0.015 $\pm$ 0.000 & 0.014 $\pm$ 0.000 \\
Llama-3B & 0.538 $\pm$ 0.000 & \textbf{0.552 $\pm$ 0.001} & \textbf{0.553 $\pm$ 0.000} \\
Llama-8B & 0.668 $\pm$ 0.001 & \textbf{0.699 $\pm$ 0.003} & \textbf{0.694 $\pm$ 0.001} \\
\bottomrule
\end{tabular}
\label{tab:bbh_MIX-14K}
\end{center}
\end{table*}

As additional experiments,
we evaluate the aligned models on BIG-Bench Hard (BBH) \citep{suzgun2023challenging},
which is a benchmark for evaluating the reasoning capabilities of language models.
We used the \href{https://github.com/EleutherAI/lm-evaluation-harness}{lm-evaluation-harness},
where we set the number of fewshot examples to 3.

Tables \ref{tab:bbh_UF-G} and \ref{tab:bbh_MIX-14K} show the BBH results on UF-G and MIX-14K, respectively.
We used boldface to indicate the best results and statistically non-different results according to a pairwise t-test with a significance level of 5\%.

RDRO achieves the best or statistically comparable performance across all models and datasets,
with the sole exception of Qwen-3B on MIX-14K.
This indicates that RDRO performs equal to or better than DDRO across multiple benchmarks.
Note that for Qwen-3B, scores are generally low on BBH across all methods and datasets, including the base model before alignment,
suggesting that it is not well-suited to this benchmark.

\section{Empirical Analysis of Training Dynamics}
\label{sec:training_dynamics}

\begin{figure*}[p]
\centering
\begin{subfigure}{0.32\textwidth}
    \includegraphics[width=\linewidth]{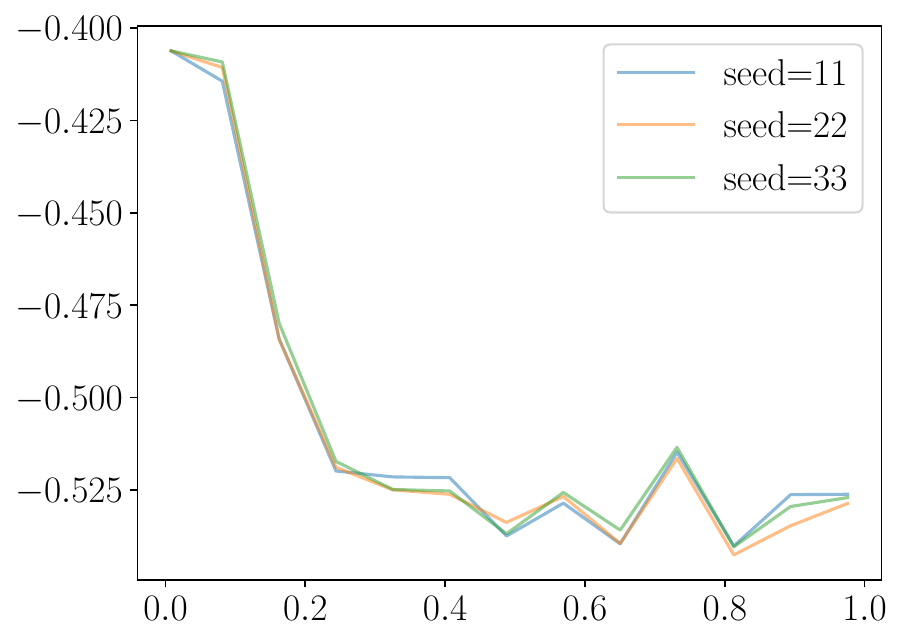}
    \caption{DDRO.}
\end{subfigure}
\begin{subfigure}{0.32\textwidth}
    \includegraphics[width=\linewidth]{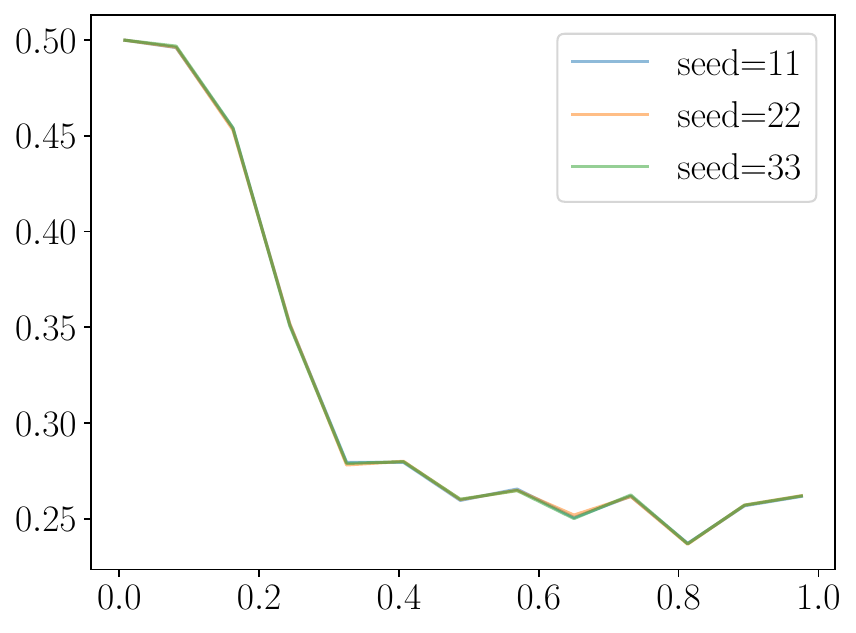}
    \caption{KTO.}
\end{subfigure}
\begin{subfigure}{0.32\textwidth}
    \includegraphics[width=\linewidth]{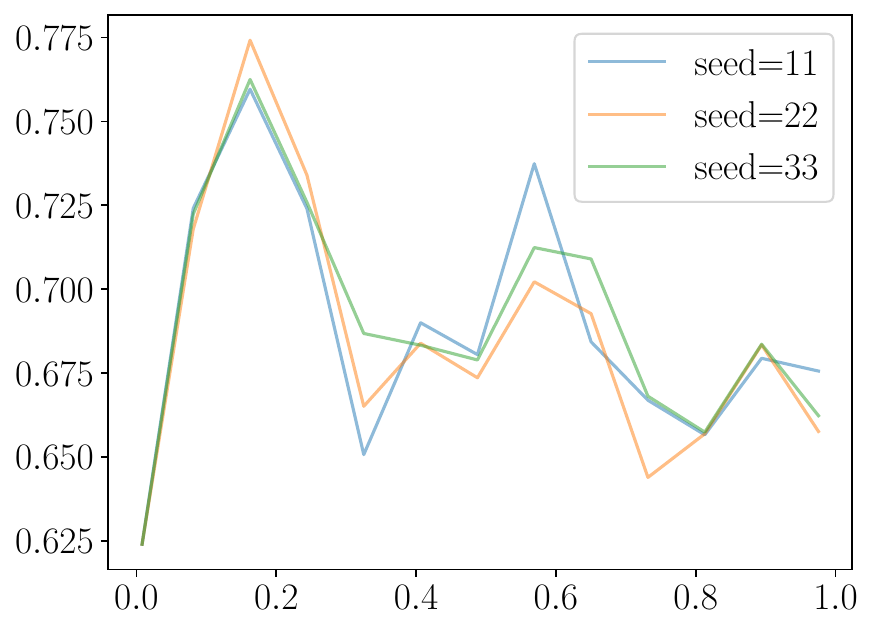}
    \caption{RDRO.}
\end{subfigure}
\caption{
    Training losses over steps for DDRO, KTO, and RDRO on Llama-8B with UF-G and $\alpha=0.39$.
    Note that the loss values are not directly comparable across methods since the objectives differ.
}
\label{fig:losses}
\end{figure*}

\begin{figure*}[p]
\centering
\begin{subfigure}{0.32\textwidth}
    \includegraphics[width=\linewidth]{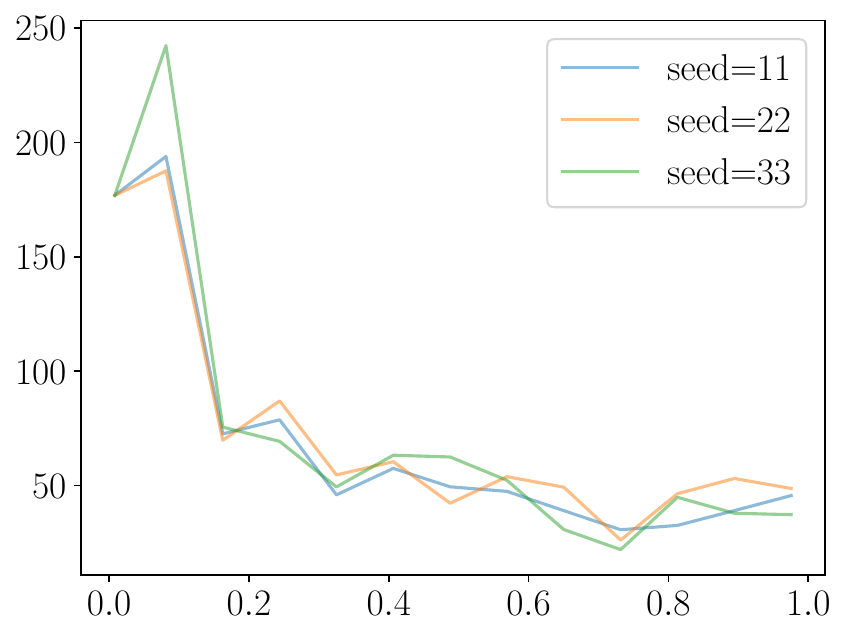}
    \caption{DDRO.}
\end{subfigure}
\begin{subfigure}{0.32\textwidth}
    \includegraphics[width=\linewidth]{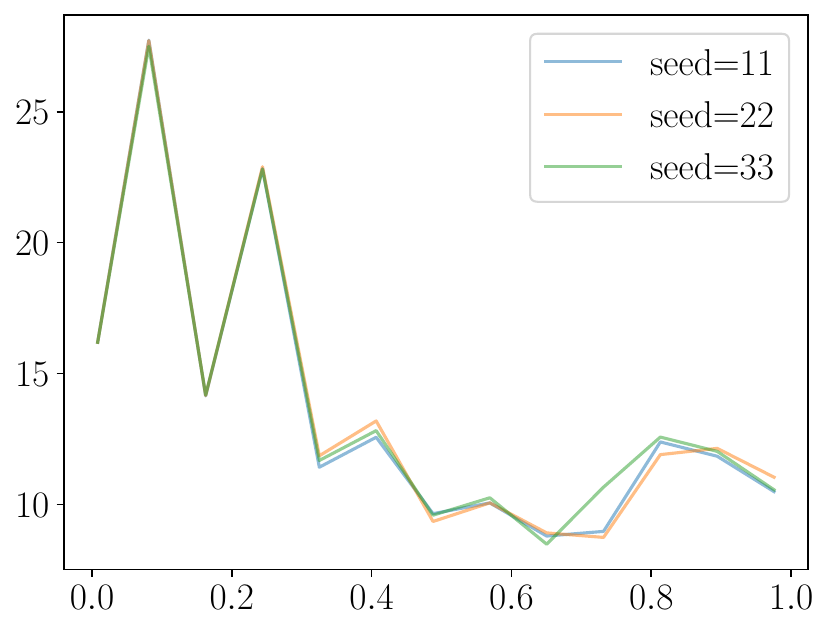}
    \caption{KTO.}
\end{subfigure}
\begin{subfigure}{0.32\textwidth}
    \includegraphics[width=\linewidth]{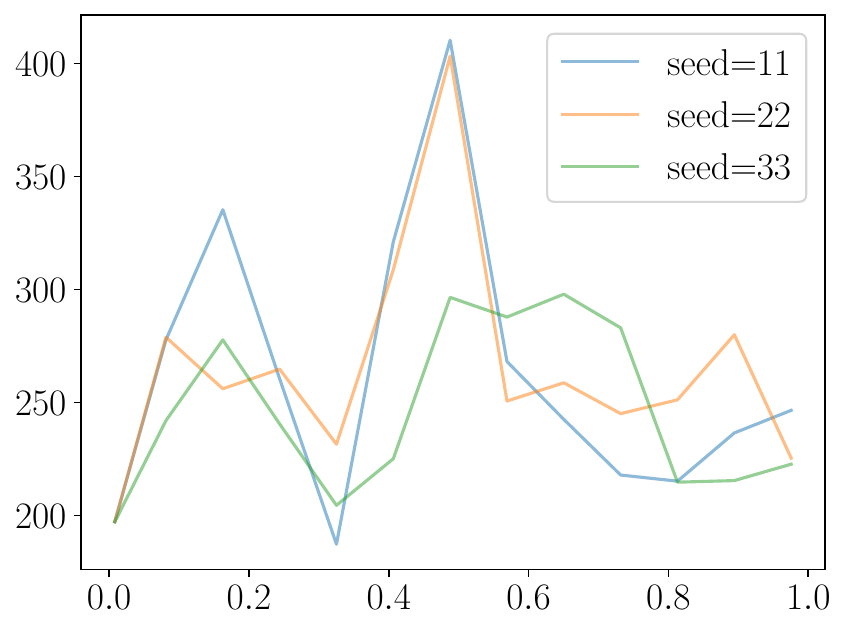}
    \caption{RDRO.}
\end{subfigure}
\caption{
    Gradient norms over steps for DDRO, KTO, and RDRO on Llama-8B with UF-G and $\alpha=0.39$.
}
\label{fig:norms}
\end{figure*}

\begin{figure*}[p]
\centering
\begin{subfigure}{0.32\textwidth}
    \includegraphics[width=\linewidth]{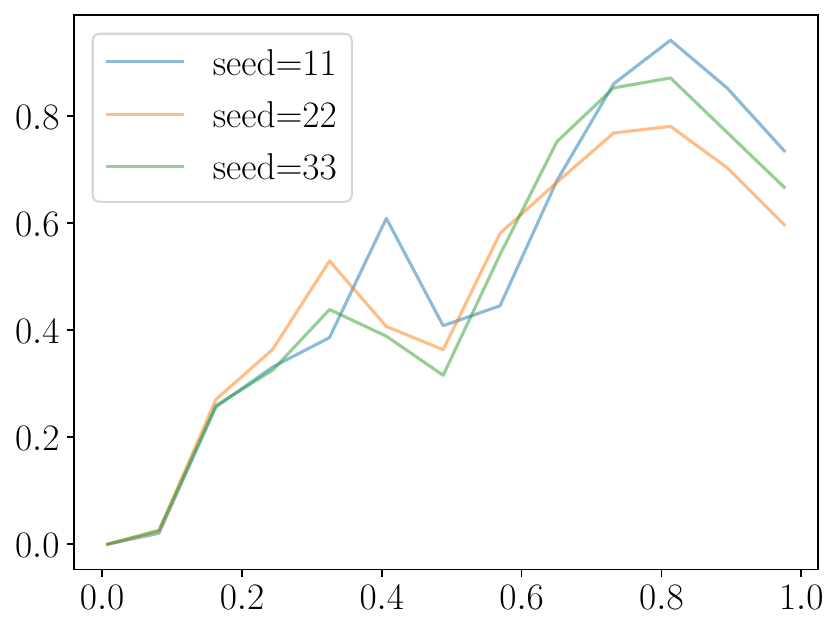}
    \caption{DDRO.}
\end{subfigure}
\begin{subfigure}{0.32\textwidth}
    \includegraphics[width=\linewidth]{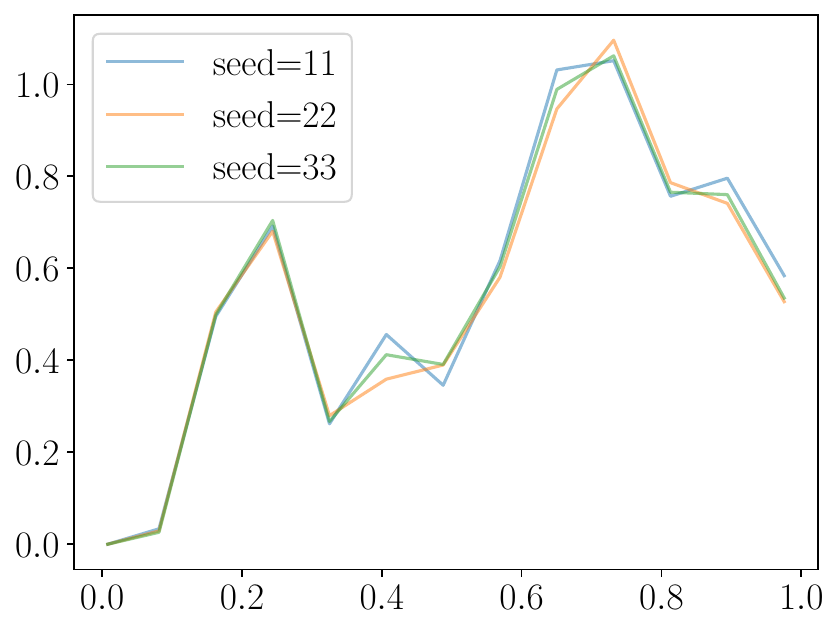}
    \caption{KTO.}
\end{subfigure}
\begin{subfigure}{0.32\textwidth}
    \includegraphics[width=\linewidth]{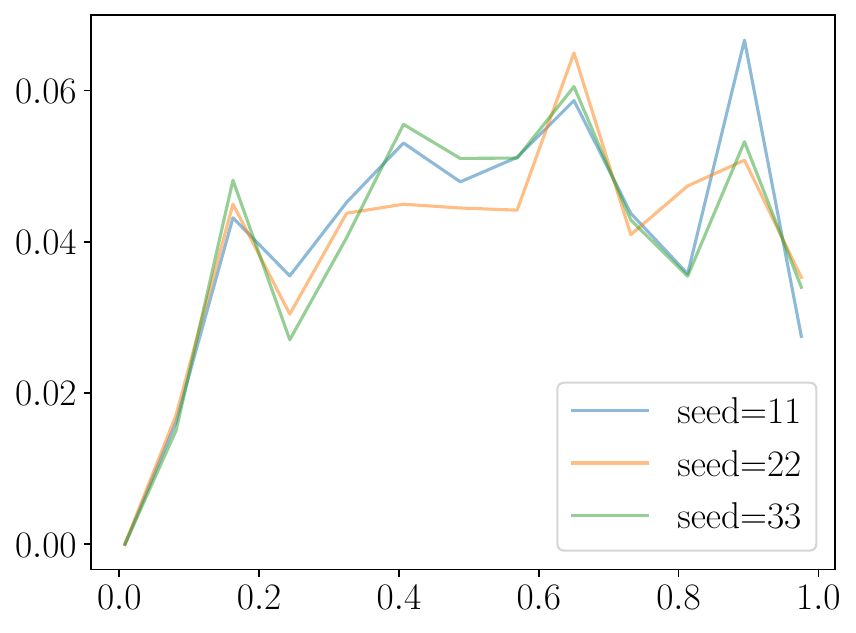}
    \caption{RDRO.}
\end{subfigure}
\caption{
  Preferred log-ratios over steps for DDRO, KTO, and RDRO on Llama-8B with UF-G and $\alpha=0.39$.
}
\label{fig:logratios_chosen}
\end{figure*}

\begin{figure*}[p]
\centering
\begin{subfigure}{0.32\textwidth}
    \includegraphics[width=\linewidth]{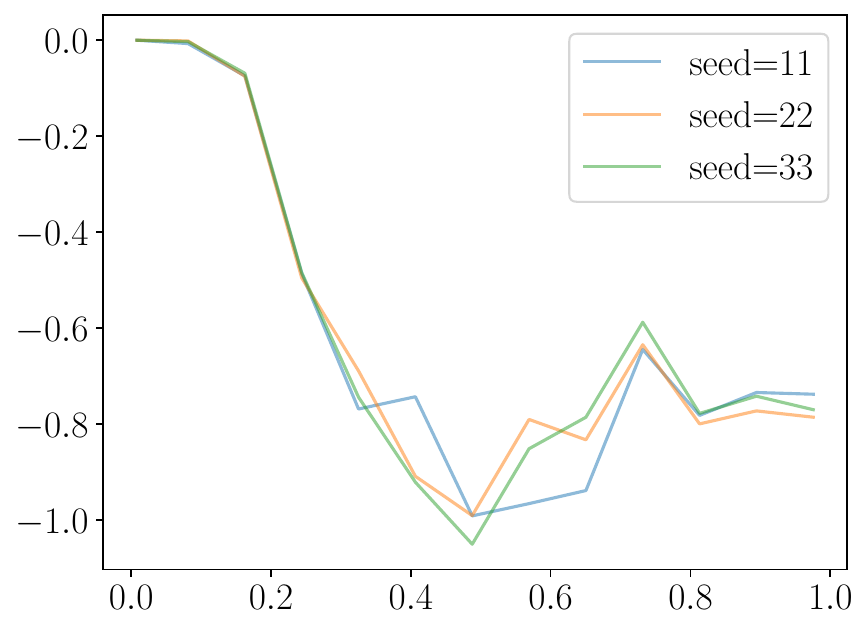}
    \caption{DDRO.}
\end{subfigure}
\begin{subfigure}{0.32\textwidth}
    \includegraphics[width=\linewidth]{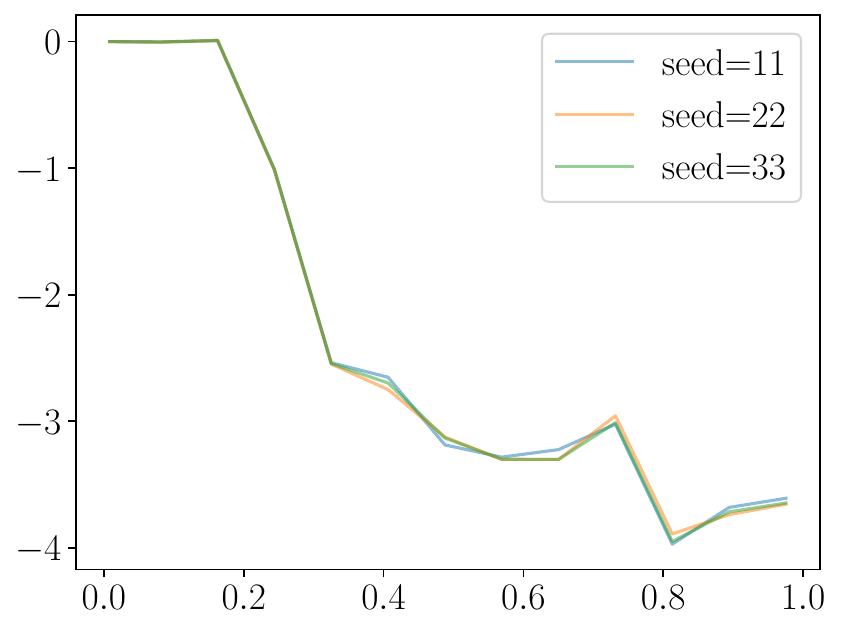}
    \caption{KTO.}
\end{subfigure}
\begin{subfigure}{0.32\textwidth}
    \includegraphics[width=\linewidth]{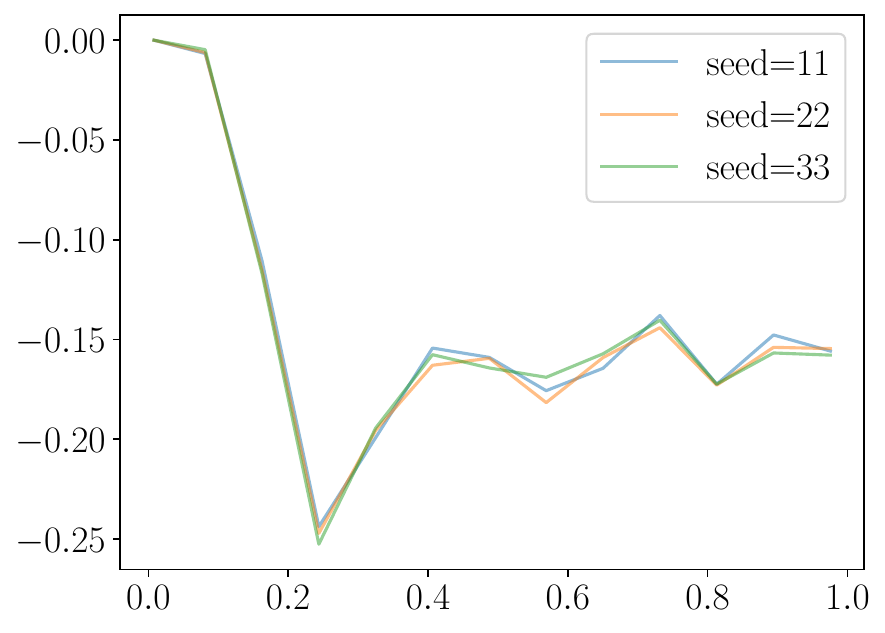}
    \caption{RDRO.}
\end{subfigure}
\caption{
  Non-preferred log-ratios over steps for DDRO, KTO, and RDRO on Llama-8B with UF-G and $\alpha=0.39$.
}
\label{fig:logratios_rejected}
\end{figure*}

In this section,
we present the training dynamics over steps (within an epoch), including training losses (Figure \ref{fig:losses}),
gradient norms (Figure \ref{fig:norms}),
and log-ratio trajectories (Figures \ref{fig:logratios_chosen} and \ref{fig:logratios_rejected}).
Here, log-ratios are defined as the difference between the log-probabilities of the target policy and the reference policy.
Figure \ref{fig:logratios_chosen} shows the log-ratios for preferred data,
and Figure \ref{fig:logratios_rejected} shows the log-ratios for non-preferred data.
We use Llama-8B on UF-G with $\alpha=0.39$.

DDRO introduces heuristic stabilization (e.g., the softplus transformation described in Eq. (\ref{eq:stabilized_dre})) to prevent gradient explosion,
as also observed in Figure 3 of \citep{higuchi2025direct}.
As a result, gradient norms appear small and training seems stable.
However, these modifications change the original objective and may lead to a mismatch with its theoretical properties.
Indeed, DDRO exhibits large changes in both preferred and non-preferred log-ratios,
indicating overly large deviations from the reference policy.
Prior work has shown that overly large deviations from the reference policy can degrade performance \citep{kwa2024catastrophic},
and our observations are consistent with this phenomenon.
While one might expect that tuning the KL regularization parameter $\beta$ could mitigate this issue,
Appendix D of \citep{higuchi2025direct} explicitly states that the KL regularization is not included in the gradient calculation for training stability.
This suggests that controlling such deviations within DDRO is non-trivial.

In contrast, RDRO does not rely on heuristic stabilization.
The softplus transformation employed in DDRO arises naturally in our formulation,
while preserving statistical consistency.
As shown in Figure~\ref{fig:norms},
although RDRO may exhibit larger gradient norms than heuristic-stabilized DDRO or KTO,
it avoids the gradient explosion observed in the original DDRO formulation.
Moreover, RDRO avoids overly aggressive updates in both preferred and non-preferred log-ratios,
while consistently improving their separation (log-ratio margin), indicating controlled alignment behavior.

Overall, RDRO may appear less stable if stability is interpreted as having small gradient norms.
However, we argue that stability should instead be understood as
(i) avoiding gradient explosion and
(ii) preventing overly large deviations from the reference policy.
Under this notion, RDRO provides more principled and stable optimization behavior,
consistent with both our theoretical guarantees and empirical observations.

\section{Limitations and Future Work}
\label{sec:limitation}

Our approach requires a dataset in which all examples are labeled as either preferred or non-preferred.
However, in practice, it is not always possible to obtain such fully labeled data.
An important direction for future work is to enable the use of unlabeled data.
For instance, one possible approach is to incorporate ideas from positive-unlabeled learning \citep{kato2019learning,kato2021non}.

\end{document}